\newtheorem{theorem}{Theorem}
\newtheorem{lemma}{Lemma}
\begin{document}
%
\title{A Concise yet Effective Model for Non-Aligned Incomplete Multi-view and Missing Multi-label Learning}
%
%
%
%
%
%
\author{Xiang Li,
        Songcan Chen
\IEEEcompsocitemizethanks{\IEEEcompsocthanksitem Xiang Li and Songcan Chen are with College of Computer Science and Technology/College of Artificial Intelligence, Nanjing University of Aeronautics and Astronautics, Nanjing, 211106, China and also with MIIT Key Laboratory of Pattern Analysis and Machine Intelligence. Corresponding author is Songcan Chen. \protect\\
E-mail: \{lx90, s.chen\}@nuaa.edu.cn.}
\thanks{Manuscript received August 5, 2020.}}

\markboth{Journal of \LaTeX\ Class Files,~Vol.~14, No.~8, August~2015}%
{Shell \MakeLowercase{\textit{et al.}}: Bare Demo of IEEEtran.cls for Computer Society Journals}
%



\IEEEtitleabstractindextext{%
\begin{abstract}
In reality, learning from multi-view multi-label data inevitably confronts three challenges: missing labels, incomplete views, and non-aligned views. Existing methods mainly concern the first two and commonly need multiple assumptions to attack them, making even state-of-the-arts involve at least two explicit hyper-parameters such that model selection is quite difficult. More toughly, they will fail in handling the third challenge, let alone addressing the three jointly. In this paper, we aim at meeting these under the least assumption by building a concise yet effective model with just one hyper-parameter. To ease insufficiency of available labels, we exploit not only the consensus of multiple views but also the global and local structures hidden among multiple labels. Specifically, we introduce an indicator matrix to tackle the first two challenges in a regression form while aligning the same individual labels and all labels of different views in a common label space to battle the third challenge. In aligning, we characterize the global and local structures of multiple labels to be high-rank and low-rank, respectively. Subsequently, an efficient algorithm with linear time complexity in the number of samples is established. Finally, even without view-alignment, our method substantially outperforms state-of-the-arts with view-alignment on five real datasets.
\end{abstract}

\begin{IEEEkeywords}
Non-aligned incomplete multi-view, missing multi-label, global and local structures, model selection.
\end{IEEEkeywords}}

\maketitle

\IEEEdisplaynontitleabstractindextext

%
\IEEEpeerreviewmaketitle

\IEEEraisesectionheading{\section{Introduction}  \label{sec1}}

%
%
%
%
\IEEEPARstart{M}{ulti-view} multi-label learning is designed to predict the multiple labels of an object represented by multiple views. In reality, multi-view multi-label learning has wide applications ranging from image classification \cite {r1,r2} to video analysis \cite {r3,r4} in that multi-view multi-label data is ubiquitous. For example, an image can be described by Histogram of Oriented Gradients (HOG), Scale Invariant Feature Transform (SIFT), and color features, meanwhile, the image can also be labeled with “tree, water, sky”; a video includes diverse representations such as audio, text, and picture, at the same time, the video can be annotated by several labels like “Shakespeare, opera, King Lear”. As two individual research fields, multi-view learning \cite {r5,r2017nie,r6,r7} and multi-label learning \cite {r8,r9,r10,r11} are severally and extensively studied in last two decades. Nonetheless, to date, as their intersection or marry, multi-view multi-label learning \cite {r12,r13} is still relatively under-studied. Note that, there has been active literature motivated for “multi-modal learning” \cite {r14,r15,r16,r17,r18}, and the scope of “multi-view learning” is more extensive since the latter contains not only the multi-modal learning but also the learning paradigm of the same modal but different perspectives.

In practice, a situation we will often suffer is that multi-view multi-label data appears in three forms: missing labels, incomplete views, and non-aligned views. Reasons behind the appearance of these issues are: (1) insufficient resources or limited knowledge makes it expensive to obtain all the relevant labels of a sample; (2) malfunction of sensors or occlusion in some views causes the incompleteness of views; (3) the completely aligned information can hardly be accessed for privacy protection or aligned views are disturbed by carelessness of mankind. All of the three issues could dramatically degenerate the performance.

Existing multi-view learning methods, no matter whichever supervised \cite{r19,r20,xu2014}, semi-supervised \cite{r21,r22,nie2017}, or unsupervised \cite{r23,r24,nie2018}, commonly assume that the views involved are aligned, however, this does not necessarily always hold in reality. For example, in the social network, users may register multiple accounts as in Facebook, Twitter, and Instagram, but it is hard to align these social accounts with the same user owing to the privacy protection; in disease diagnosis, we obtain different types of examination data of the patients from different hospitals, but for the same sake of privacy protection, we cannot align these data with the same patient; in questionnaire survey, different organizations survey the same group of people, but the survey is generally anonymous, making the alignment information unavailable. In addition, non-aligned views are natural in recommendation system \cite{r25}, video surveillance \cite{r26} and so on.

It is worth emphasizing that the non-aligned views add extra challenges to the original considerably challenging problem with missing labels and incomplete views. The reasons are as follows: (1) The non-aligned views make the interactions among different views no longer easy to be available, thus their explicitly complementary information can hardly be exploited. (2) In conventional incomplete multi-view learning, those missing views of samples can be completed with the help of the paired ones from other observed views, however, in the non-aligned views setting, no paired sample can be available. As a result, the problem of multi-view incompleteness is more severe and hard to be tractable even if possible. (3) Under the situation of non-aligned views, information able to be utilized for the correspondence among views most probably hides in the common or shared labels of samples from different views, however, unfortunately, in the missing multi-label setting, such beneficial label information is quite limited.

\noindent {\textbf {Remark 1.}} If the multiple labels are complete during training, samples of the non-aligned views can be simply aligned by directly using the given labels and then divided into the corresponding groups. We refer interested readers to \cite{r27} for the partially non-aligned two views. Consequently, the non-alignment of multiple views under the situation of complete multiple labels is formally relatively trivial. However, it is worth emphasizing that in the case of missing multiple labels, the complete ground truth of the samples is no longer available. Therefore, beneficial label information for aligning the non-aligned views is quite limited, making the problem of non-aligned multiple views with the missing multiple labels more challenging and non-trivial any longer. To alleviate such insufficiency of multiple labels, extra and accurate structural information of multiple labels accompanied the non-aligned incomplete multiple views needs to be mined, which is the main focus of this work.

\begin{figure*}[htbp]
\centering
\includegraphics[width=1\textwidth]{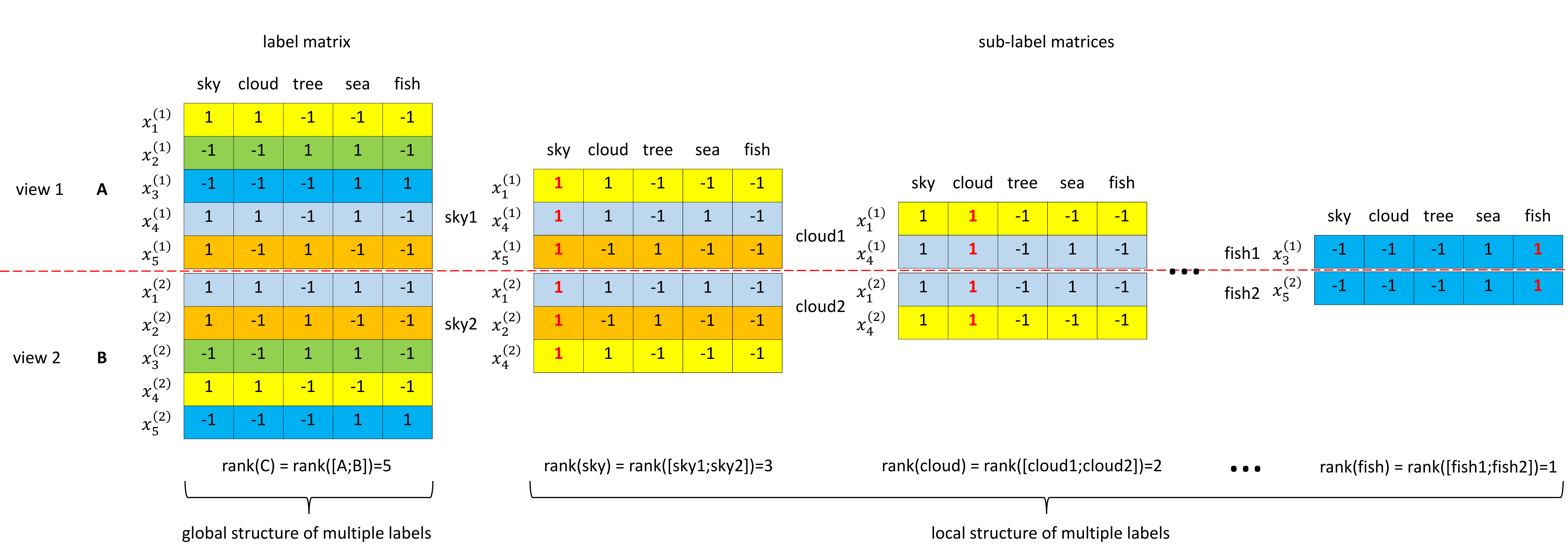} 
\caption{The global and local structures of the multiple labels. The same sample of non-aligned views is represented by the same color, and ”sky”, ”cloud” , … ,”fish” are the labels. ”1” in the label matrix means that the sample is annotated with the corresponding label whereas “-1” means not. All the label matrices are vertically concatenated. The label matrix of all samples from two views has full column rank or high rank, which refers to the global structure of multiple labels. Meanwhile, the sub-label matrix comprised of samples that share the same individual label tends to have low rank, e.g., the rank of sub-label matrix that share the same label “cloud” equals to 2, which corresponds to the local structure of multiple labels.}
\label{fig1}
\end{figure*}

To date, existing methods \cite{r28,r29} of addressing the three challenges (\textit {missing labels, incomplete views, and non-aligned views}) usually suffer from two disadvantages. First, they only focus on the first two challenges and assume that the views involved must be aligned. However, this assumption does not necessarily always hold in reality as above-mentioned. Confronting non-aligned views, traditional multi-view learning methods operating in the aligned cases are difficult to be directly adopted due to the lack of straightforward interactions among views. Second, introducing overmuch hyper-parameters in dealing with the problems of incomplete views and missing labels makes them difficult to optimize and reproduce.

To alleviate the aforecited two disadvantages, two questions arise naturally. One is how to simultaneously address all the three challenges. The other is how to efficiently build a model with as few hyper-parameters as possible.

In this paper, we propose a \textbf{N}on-\textbf{A}ligned \textbf{I}ncomplete \textbf{M}ulti-view and \textbf{M}issing \textbf{M}ulti-label \textbf{L}earning method abbreviated as NAIM$^3$L to address the arising two questions. Our starting point is that although samples among views are not aligned explicitly, they can still be bridged implicitly through the common or shared labels and thus can be learned complementally. Besides, intuitively, the samples with similar labels are more prone to be strongly correlated with each other whereas those with dissimilar labels are weakly correlated, or even uncorrelated, which reflects the local and the global structural relations within multiple labels, respectively. Mathematically, the local correlated structure can approximately correspond to the low rankness of the label matrix of samples sharing the same individual label while the global weakly correlated or uncorrelated structure can roughly correspond to the high rankness of the label matrix of all samples. An intuitive description of this global-local structure is illustrated in Fig. \ref {fig1}. For conciseness and due to the limitation of space, we only use two views to illustrate the global and the local structures of multiple labels, but such an illustration can be directly generalized to the case of more than two views. 

Note that in traditional multi-label learning methods, the label matrix is often assumed to be low-rank rather than high-rank as we do. We argue from the following two perspectives that the high-rank assumption is reasonable. First, intuitively, samples in real datasets with multiple labels are usually diverse and contain dissimilar labels. As samples with dissimilar labels are weakly correlated, or even uncorrelated, thus the entire multi-label matrix is often of a high rank. Second, mathematically, as entries of the label matrix take binary values, it is unlikely for this matrix to be low-rank. To further demonstrate the rationality of our assumptions, we show the high-rankness of the entire label matrix corresponding to all samples and the low-rankness of each sub-label matrix corresponding to samples that share a single label in Fig. \ref {Fig2a} and \ref {Fig2b}, respectively. It is well known that the rank of a matrix is equal to the number of its non-zero singular values. From Fig. \ref {Fig2a}, we can find that the singular values of the training and testing label matrices have a heavy tail, which indicates the high-rankness of the entire label matrix (the numbers of non-zero singular values of these two matrices are 259 and 249, i.e., the ranks are 259 and 249, either full-rank or almost full-rank). As the number of multiple labels is too big to show the low-rankness of each sub-label matrix in a single picture, alternatively, we show the mean value and the median value of the ranks of these sub-label matrices. From Fig. \ref {Fig2b}, we can observe that the ranks of the sub-label matrices are relatively low (the ranks are about 15 and 5). In brief, these observations are consistent with the assumptions of our model. 

\begin{figure}[h]
	\centering
	\subfigure[High-rank]{
		\includegraphics[width=0.227\textwidth]{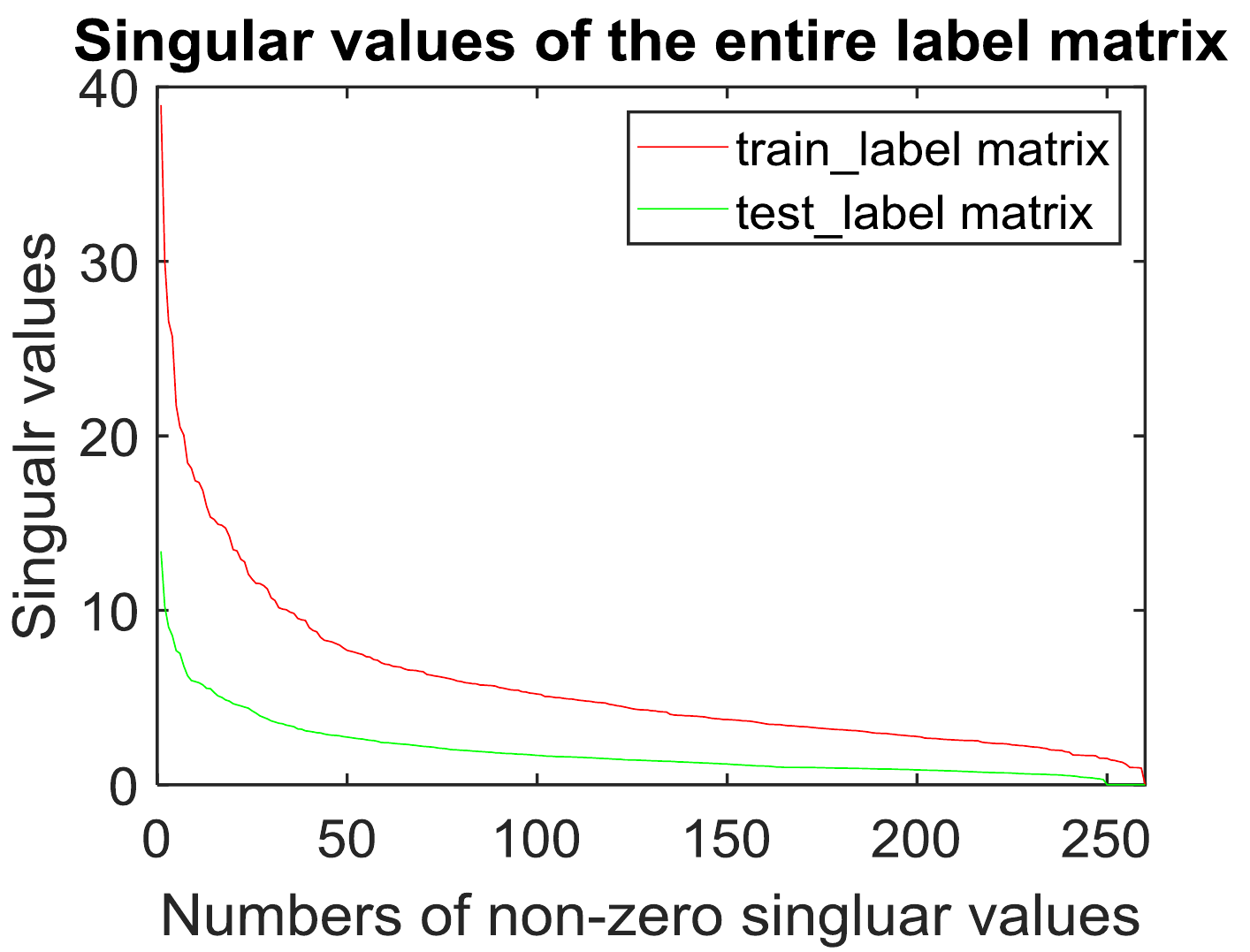}
		\label{Fig2a}
	}
	\subfigure[Low-rank]{
		\includegraphics[width=0.227\textwidth]{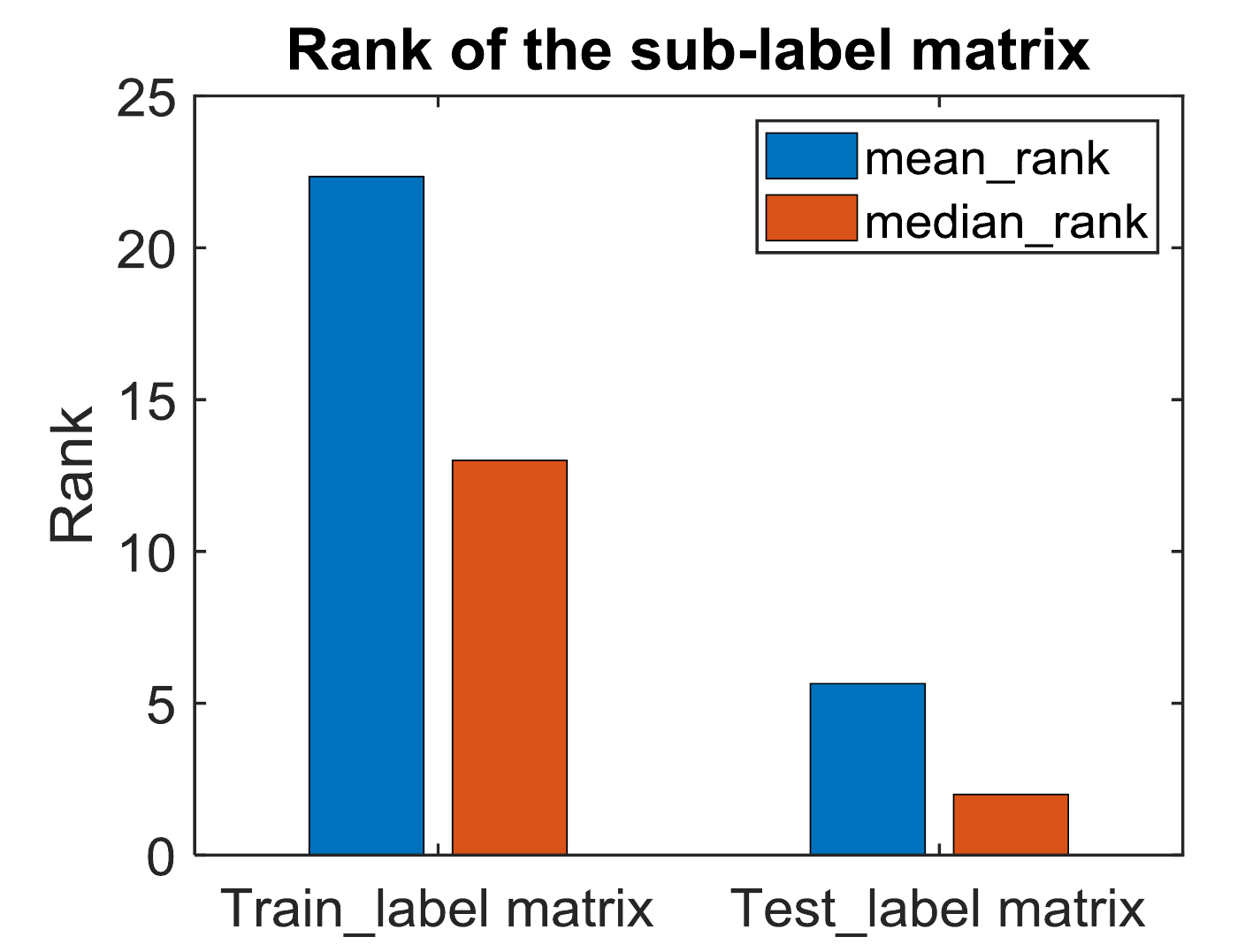}
		\label{Fig2b}
	}
	
	\caption{The high-rankness of the entire label matrix corresponding to all samples and the low-rankness of the sub-label matrices corresponding to samples that share a single label on Corel5k dataset, which has 260 labels.}
\end{figure}

By formulating the above two structures as a single regularizer, we design a final optimization objective for the model establishment. What makes our method most different from existing multi-label learning methods is that the latter almost completely neglect the global high-rank structure, which will be proved in our experiments that such global high rankness plays an indispensable role. In summary, our contributions are fourfold:

(1) To the best of our knowledge, this is the first multi-view multi-label learning work of jointly considering non-alignment of views, incompleteness of views, and missing of labels, which is much more realistic and more challenging. Contrary to other methods, this is the first work that explicitly models the global structure of the multi-label matrix to be high-rank, whose effectiveness has been validated in our experiments.

(2) We utilize the ConCave-Convex Procedure (CCCP) to reduce the objective function as a convex optimization problem, and then provide an efficient Alternating Direction Method of Multipliers (ADMM) algorithm by which a closed form solution of each sub-problem is derived. Besides, the customized ADMM algorithm for NAIM$^3$L has linear computational complexity with respect to the number of samples, which makes it more efficient to handle large scale data.
 
(3) An efficient algorithm enjoying linear time complexity regarding the number of samples is derived as a byproduct to compute the sub-gradient of the trace norm.

(4) Even without view-alignment, our method can still achieve better performance on five real datasets compared to state-of-the-arts with view-alignment.

Compared with other multi-view multi-label learning methods, our model exhibits the following four merits. Firstly, only one hyper-parameter corresponding to the regularization term is introduced in modeling, which makes its optimization greatly easier than existing methods. Secondly, our model is inductive, thus it can be directly applied to predict unseen samples. Thirdly, our model outperforms state-of-the-arts on five real datasets even without view-alignment. Fourthly, our model can also be directly non-linearized to its kernerlized version and cooperate with deep neural network in an end-to-end manner.

The rest of this paper is organized as follows. In Section \ref {secII}, we briefly overview some related work of multi-view multi-label learning. Section \ref {secIII} proposes our method NAIM$^3$L, and an efficient ADMM algorithm is presented in Section \ref {secIV} to solve it. Extensive experimental results and analyses are reported in Section \ref {secV}. Section \ref {secVI} concludes this paper with future research directions.

\section{Related Work} \label{secII}
To date, in the face of the aforesaid three challenges, existing work concerns either only one or two of them. According to the number of challenges addressed, we can roughly divide them into two major categories: methods of addressing one challenge and methods of addressing two challenges, where the former can be divided into three sub-categories: non-aligned multi-view learning, incomplete multi-view learning, and multi-label learning with missing labels. In this section, we overview recent research closely related to ours based on the above taxonomy.

\subsection{Methods of Addressing One Challenge}
\textbf{Non-aligned multi-view learning} deals with the problem that samples in all views are totally unpaired while accompanied with certain constraints from (weakly) supervised information such as must-link (ML) and cannot-link (CL). We will give a formal definition of non-aligned views in subsection \ref{III-A}. To our knowledge, UPMVKSC \cite {r30} is the first and only work that considers the non-aligned multiple views. In UPMVKSC, the authors incorporated the ML and the CL constraints into the kernel spectral clustering to increase the learning performance.  However, this work assumes that views involved are complete and information about the constraints must be given in prior.

\textbf{Incomplete multi-view learning} handles the issue that samples in some views are missing. Recently, some incomplete multi-view learning methods have been proposed. For example, Xu \textit{et al.} \cite {r31} have proposed a method termed MVL-IV to accomplish multi-view learning with incomplete views by assuming that different views should be generated from a shared subspace. Afterwards, Du \textit{et al.} \cite {r32} have modeled the statistical relationships of multi-modality emotional data using multiple modality-specific generative networks with a shared latent space, in which a Gaussian mixture assumption of the shared latent variables is imposed. Lately, Xue \textit{et al.} \cite {r33} have integrated semi-supervised deep matrix factorization, correlated subspace learning, and multi-view label prediction into a unified framework to jointly learn the deep correlated predictive subspace and multi-view shared and private label predictors. Newly, Zhang \textit{et al.}\cite {r34} have presented CPM-Nets to learn the latent multi-view representation through mimicking data transmitting, such that the optimal trade-off between consistence and complementarity across different views can be achieved. In summary, all the above methods assume the views involved are aligned and focus on the task of multi-class learning, which is a special case of multi-label learning when each sample is annotated with only one label.  In addition, there have been abundant literature about incomplete multi-view clustering \cite {r35,r36,r37,r38}, which is not the focus of this paper.

\textbf{Multi-label learning with missing labels} aims to predict the complete labels by giving part of multiple labels of an object. Zhang \textit{et al.} \cite {r39} have proposed a framework to sufficiently leverage the inter-label correlations and the optimal combination of heterogeneous features based on multi-graph Laplacian.  Liu \textit{et al.} \cite {r40} have presented a model called lrMMC that first seeks a low-dimensional common representation of all the views by constraining their common subspace to be low-rank and then utilizes matrix completion for multi-label classification. Zhu \textit{et al.} \cite {r41} have put forward a method termed GLMVML, which extends the GLOCAL \cite {r42} model to its multi-view version by exploiting the global and the local label correlations of all the views and each view simultaneously.
To see the differences of the global and local structures between GLOCAL and our NAIM$^3$L, we briefly describe GLOCAL as follows. Given a dataset $\mathbf{X}$ and its corresponding multi-label matrix $\mathbf{Y}$, GLOCAL exploits the global and local label correlations by the manifold regularization,
\begin{equation} \label{eqa}
\begin{aligned}
\min _{\mathbf{U}, \mathbf{V}, \mathbf{W}}&\left\|\Pi_{\Omega}(\mathbf{Y}-\mathbf{U} \mathbf{V})\right\|_{F}^{2}+\lambda_{1}\left\|\mathbf{V}-\mathbf{W}^{T} \mathbf{X}\right\|_{F}^{2} \\
&+\lambda_{2}(\|\mathbf{U}\|_{F}^{2}+\|\mathbf{V}\|_{F}^{2}+\|\mathbf{W}\|_{F}^{2})+\lambda_{3} \operatorname{tr}\left(\mathbf{F}_{0}^{T} \mathbf{L}_{0} \mathbf{F}_{0}\right) \\
&+\sum_{m=1}^{g} \lambda_{4} \operatorname{tr}\left(\mathbf{F}_{m}^{T} \mathbf{L}_{m} \mathbf{F}_{m}\right)
\end{aligned},
\end{equation}
where $\Pi_{\Omega}$ is a projection operator, $\mathbf{U}\mathbf{V}$ is the low-rank decomposition of $\mathbf{Y}$, and $\mathbf{W}$ is a linear mapping matrix to be learned. $\mathbf{L}_{0}$ and $\mathbf{L}_{m}$ are the Laplacian matrices encoding the global and the local label correlations, respectively. $\mathbf{F}_{0}$ and $\mathbf{F}_{m}$ are the classifier output matrices for all samples and group $m$, respectively.
Although some of them also consider the global and the local structures, the main differences from ours lie in that almost all methods of such a kind assume not only the global and the local manifold structures of the given data but also the low-rankness of the whole label matrix, whereas our method just needs an assumption about the rank of (predictive) label matrix to formulate the global and the local structures, and more importantly, we argue that the whole label matrix should be high-rank opposing to the popular low-rank assumption.

\subsection{Methods of Addressing Two Challenges}
As far as we have known, there are only two methods called iMVWL \cite {r28} and IMVL-IV \cite {r29} that take both incomplete multi-view and missing multi-label into consideration. iMVWL learns a shared subspace from incomplete views by exploiting weak labels and local label correlations, and then trains a predictor in this subspace such that it can capture not only cross-view relationships but also weak-label information of the training samples. We present the formulation of iMVWL to distinguish its low-rank assumption from our high-rank assumption. Specifically, given a multi-view dataset of $n_{v}$ views $\left\{\mathbf{X}_{v}\right\}_{v=1}^{n_{v}}$ and its corresponding multi-label matrix $\mathbf{Y}$, the objective function of iMVWL is formulated as follows:
\begin{equation} \label{eqb}
\begin{aligned}
	\min _{\left\{\mathbf{U}_{v}, \mathbf{V}, \mathbf{W}, \mathbf{S}\right\}} \sum_{v=1}^{n_{v}}\left\|\mathbf{O}^{v} \odot\left(\mathbf{X}_{v}-\mathbf{V} \mathbf{U}_{v}^{T}\right)\right\|_{F}^{2} \\
	+\alpha\|\mathbf{M} \odot(\mathbf{V} \mathbf{W} \mathbf{S}-\mathbf{Y})\|_{F}^{2}+\beta\|\mathbf{S}\|_{*}
\end{aligned},
\end{equation}
where $\odot$ denotes the Hadamard product, $\mathbf{O}^{v}$ and $\mathbf{M}$ are the indicator matrices for the missing views and labels, respectively.  $\mathbf{V}\mathbf{U}_{v}^{T}$ is the non-negative decomposition of $\mathbf{X}_{v}$, $\mathbf{W}$ is the prediction coefficient matrix, and $\mathbf{S}$ is the label correlation matrix. 
Differently, IMVL-IV provides a unified framework for characterizing multiple ingredients including label-specific features, global and local correlations among labels, low-rank assumption of the label matrix, and consistency among the representations of these views. However, these methods require views to be aligned, and involve at least two explicit hyper-parameters in their objectives. More dauntingly, IMVL-IV even contains ten hyper-parameters.

\subsection{Summary}
To summarize, the aforementioned methods have three weaknesses: (1) Most of them assume that the views involved must be aligned, which naturally limits their applicability in practice. (2) Except for MVL-IV, all the methods involve at least two explicit hyper-parameters in their modeling, and some of them even have more extra implicit hyper-parameters making the model selection quite cumbersome. (3) Most of them are created by the non-negative matrix factorization \cite {r43}, resulting in the failure of directly obtaining an inductive learner, thus being suboptimal for predicting unseen samples. In the next section, we will propose a concise yet effective model to overcome the above shortcomings.

\section{The Proposed Method}  \label{secIII}
\subsection{Problem Settings} \label{III-A}
In this subsection, we first give the formal definition of non-aligned views and then present the problem settings of our model in detail.

\textbf{Definition 1.} Given a multi-view multi-label data set $\Omega$,  suppose that $\Omega=\left\{\mathbf{X}^{(i)}\right\}_{i=1}^{V}$ contains $V$ different views, where $\mathbf{X}^{(i)}=\left[\mathbf{x}_{1}^{(i)}, \mathbf{x}_{2}^{(i)}, \cdots, \mathbf{x}_{n}^{(i)}\right] \in \mathbb{R}^{n \times d_{i}}$ is the feature matrix of the $i$-th view, $n$ and $d_i$ are the numbers of samples and the dimensions of features of the $i$-th view, respectively. If samples across all views are totally unpaired, i.e., the $m$-th sample of the $i$-th view $\mathbf{x}_{m}^{(i)}$ and the $m$-th sample of the $j$-th view $\mathbf{x}_{m}^{(j)}$ are distinct samples, for all $m \in\{1,2, \cdots, n\}$, $i, j \in\{1,2, \cdots, V\}$, and $i\ne j$. Then these views are called non-aligned views.

In traditional full-label setting, $\mathbf{Y}^{(i)}=\left[\mathbf{y}_{1}^{(i)}, \mathbf{y}_{2}^{(i)}, \cdots, \mathbf{y}_{n}^{(i)}\right]$ $\in\{-1,1\}^{n \times c}$ is the corresponding label matrix of the $i$-th view and $c$ is the number of multiple labels. $\mathbf{y}_{j k}^{(i)}=1$ $(k=1,2, \cdots, c)$ means the $k$-th label is relevant while $\mathbf{y}_{j k}^{(i)}=-1$ means irrelevant. By considering the missing labels setting, some labels may not be observed, for example, when the $k$-th label of the $j$-th sample in the $i$-th view is missing, $\mathbf{y}_{j k}^{(i)}=0$, and it does not provide any information. Moreover, in the incomplete multi-view scenario, partial views of some samples are missing, correspondingly, the rows of these samples in the feature matrix $\mathbf{X}^{(i)}$ are missing.
\subsection{Problem Formulation}
In this subsection, we focus on the task of predicting the labels of unlabeled test data by learning from non-aligned incomplete multi-view and missing multi-label training data. 

Predicting labels has attracted tremendous interests of researchers in the machine learning community and numerous work has been put forward. Among them, the linear regression \cite {r44} might be the most widely used framework due to its simplicity and effectiveness. Thus, we formulate the prediction as a regression problem. Formally, the loss function can be written as follows,

\begin{equation}\label{Eq1}
\mathcal{L}=\frac{1}{2} \sum_{i=1}^{V}\left\|\mathbf{X}^{(i)} \mathbf{W}^{(i)}-\mathbf{Y}^{(i)}\right\|_{F}^{2}, 
\end{equation}
where $\mathbf{W}^{(i)} \in \mathbb{R}^{d_{i} \times c}$ is the coefficient matrix corresponding to the $i$-th view. Further, to deal with the challenge of missing labels, we introduce an indicator matrix $\mathbf{P}^{(i)}(i=1,2, \cdots, V)$ for each label matrix. Let $\Omega \subseteq\{1,2, \cdots, n\} \times\{1,2, \cdots, c\}$ be the set of indices that observed in the label matrix $\mathbf{Y}^{(i)}$, then the definition of $\mathbf{P}^{(i)}$ is as follows:

\begin{equation}\label{Eq2}
\mathbf{P}_{j k}^{(i)}=\left\{\begin{array}{ll}
1 & \text { if }(j, k) \in \Omega \\
0 & \text {otherwise.}
\end{array}\right. 
\end{equation}

Moreover, views are incomplete in our settings, to alleviate the negative impact arising from the incompleteness of multiple views, we set the rows of $\mathbf{P}^{(i)}$ to zero if the corresponding rows of $\mathbf{X}^{(i)}$ are missing, i.e.,  $\mathbf{P}_{j \bullet}^{(i)}=0$ if the $j$-th sample of the $i$-th view is missing, where $\mathbf{P}_{j \bullet}^{(i)}=0$  denotes the $j$-th row of the indicator matrix $\mathbf{P}^{(i)}$. By introducing $\mathbf{P}^{(i)}$, Eq. \eqref{Eq1} can be rewritten as:

\begin{equation}\label{Eq3}
\mathcal{L}=\frac{1}{2} \sum_{i=1}^{V}\left\|\mathbf{P}^{(i)} \odot\left(\mathbf{X}^{(i)} \mathbf{W}^{(i)}-\mathbf{Y}^{(i)}\right)\right\|_{F}^{2}, 
\end{equation}
where $\odot$ denotes the Hadamard product. Apparently simple as Eq. \eqref{Eq3} seems, it serves three purposes. First, it can be used to predict unlabeled data. Second, inferences of missing labels on training data can be achieved as a byproduct. Third, it can handle both the missing labels and incomplete views.

However, the above loss function neither utilizes multi-view consistence nor exploits multi-label structures. Thus, how to combine these two properties to make our model more discriminative is the main concern in the following. 

Unfortunately, we are confronted with two obstacles when combing aforementioned two properties. One is that non-aligned views make the consensus of multiple views difficult to guarantee. The other is while dealing with non-aligned views, we also need to consider collaborating with the multi-label structures at the same time.

From the observation that although samples among views are not aligned explicitly, they can implicitly be bridged through the common or shared labels. To mitigate the above two obstacles, we align different views in a common label space, in which we characterize the global-local structures of multiple labels. Our motivations are intuitive. First, although the views are not aligned, samples of different views that share the same label should be consistent, hence, views can be aligned by their labels. Second, in real world, samples with similar labels are strongly correlated with each other whereas those with dissimilar labels are weakly correlated, or even uncorrelated. This implies the low rankness of the sub-label matrix of samples sharing the same label and the high rankness of the label matrix of all samples. Finally, the regularizer $\mathcal{R}$ is formulated as:
\begin{equation}\label{Eq4}
\begin{aligned}
\mathcal{R}=& \sum_{k=1}^{c}\left\|[\mathbf{X}_{k}^{(1)} \mathbf{W}^{(1)} ; \mathbf{X}_{k}^{(2)} \mathbf{W}^{(2)}; \cdots ; \mathbf{X}_{k}^{(V)} \mathbf{W}^{(V)}]\right\|_{*} \\
&-\left\|[\mathbf{X}^{(1)} \mathbf{W}^{(1)} ; \mathbf{X}^{(2)} \mathbf{W}^{(2)} ; \cdots ; \mathbf{X}^{(V)} \mathbf{W}^{(V)}] \right\|_{*}
\end{aligned}, 
\end{equation}
where $\|\bullet \|_{*}$ denotes the trace norm, $[\mathbf{A} ; \mathbf{B}]$ is the vertical concatenation of matrices $\mathbf{A}$ and $\mathbf{B}$,  and $\mathbf{X}_{k}^{(i)}$ is the sub-matrix of $\mathbf{X}^{(i)}$ which consists of samples corresponding to the $k$-th label observed in the $i$-th view. Note that, the intersection of  $\mathbf{X}_{k}^{(i)}$ w.r.t. $k$ is non-empty due to the fact that a sample has multiple labels.

By concatenating the samples that share the same single label in all V views (an early fusion strategy), the first term of  Eq. \eqref{Eq4} aims at two purposes. It not only aligns samples of different views in a common label space to ensure consistency but also characterizes the local low-rank structure of each predictive sub-label matrix corresponding to samples that share the same single label. Similarly, the second term aligns diverse views of all samples and depicts the global high-rank structure of multiple labels corresponding to all samples. An intuitive illustration of this global-local structure is shown in Fig. \ref {fig1}. Combining Eq. \eqref{Eq3} and \eqref{Eq4}, the final objective function is formulated as:
\begin{equation}\label{Eq5}
\begin{aligned}
\min _{\mathbf{W}^{(i)}} & \frac{1}{2} \sum_{i=1}^{V}\left\|\mathbf{P}^{(i)} \odot\left(\mathbf{X}^{(i)} \mathbf{W}^{(i)}-\mathbf{Y}^{(i)}\right)\right\|_{F}^{2} \\
&+\lambda\left(\sum_{k=1}^{c}\left\|[\mathbf{X}_{k}^{(1)} \mathbf{W}^{(1)} ; \mathbf{X}_{k}^{(2)} \mathbf{W}^{(2)} ; \cdots ; \mathbf{X}_{k}^{(V)} \mathbf{W}^{(V)}]\right\|_{*}\right.\\
&\left.-\left\|[\mathbf{X}^{(1)} \mathbf{W}^{(1)} ; \mathbf{X}^{(2)} \mathbf{W}^{(2)} ; \cdots ; \mathbf{X}^{(V)} \mathbf{W}^{(V)}]\right\|_{*}\right).
\end{aligned}
\end{equation}

Note that, the two terms of the regularizer $\mathcal{R}$ are designed to jointly describe the global-local structure of multiple labels. More importantly, these two terms work as a whole and either of them is indispensable, which will be validated by ablation study in subsection \ref {V-D}. Thus, we only need one hyper-parameter in our objective function. However, the regularizer $\mathcal{R}$ is the difference of two convex functions, if this term is negative, then we may get trivial solutions when $\lambda$ is large enough. In the following, we will rigorously prove a theorem to claim that $\mathcal{R}$ is non-negative, thus, trivial solutions can be avoided.

\begin{lemma} \label{lemma1} {\rm \cite{r45}}
 Let $\mathbf{A}$ and $\mathbf{B}$ be matrices of the same row dimensions, and $[\mathbf{A}, \mathbf{B}]$ be the concatenation of $\mathbf{A}$ and $\mathbf{B}$, we have $\| [\mathbf{A}, \mathbf{B}]\left\|_{*} \leq\right\| \mathbf{A}\left\|_{*}+\right\| \mathbf{B} \|_{*}$.
\end{lemma}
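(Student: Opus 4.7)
The plan is to prove Lemma \ref{lemma1} by reducing it to the triangle inequality of the nuclear norm plus the observation that padding a matrix with zero columns does not alter its nonzero singular values. Concretely, I would write
\begin{equation*}
[\mathbf{A},\mathbf{B}] \;=\; [\mathbf{A},\mathbf{0}] + [\mathbf{0},\mathbf{B}],
\end{equation*}
where the two summands on the right have the same shape as $[\mathbf{A},\mathbf{B}]$ and the $\mathbf{0}$ blocks have column dimensions matching $\mathbf{B}$ and $\mathbf{A}$ respectively. Since $\|\cdot\|_{*}$ is a norm (being the dual of the spectral norm), the triangle inequality immediately yields
\begin{equation*}
\|[\mathbf{A},\mathbf{B}]\|_{*} \;\leq\; \|[\mathbf{A},\mathbf{0}]\|_{*} + \|[\mathbf{0},\mathbf{B}]\|_{*}.
\end{equation*}

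The remaining task is to identify $\|[\mathbf{A},\mathbf{0}]\|_{*}$ with $\|\mathbf{A}\|_{*}$ (and analogously for $\mathbf{B}$). I would do this via the Gram matrix: observe that
\begin{equation*}
[\mathbf{A},\mathbf{0}]\,[\mathbf{A},\mathbf{0}]^{T} \;=\; \mathbf{A}\mathbf{A}^{T},
\end{equation*}
so $[\mathbf{A},\mathbf{0}]$ and $\mathbf{A}$ share the same set of nonzero eigenvalues of the Gram matrix, hence the same nonzero singular values (and $[\mathbf{A},\mathbf{0}]$ has extra zero singular values that contribute nothing to the trace norm). Consequently
\begin{equation*}
\|[\mathbf{A},\mathbf{0}]\|_{*} \;=\; \sum_{i}\sigma_{i}([\mathbf{A},\mathbf{0}]) \;=\; \sum_{i}\sigma_{i}(\mathbf{A}) \;=\; \|\mathbf{A}\|_{*},
\end{equation*}
and likewise $\|[\mathbf{0},\mathbf{B}]\|_{*}=\|\mathbf{B}\|_{*}$. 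Combining the two displays finishes the proof.

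There is no real obstacle here: the only mildly subtle point is recognising that the Gram-matrix identity is the cleanest way to justify the invariance of singular values under zero-padding; a reader who prefers a direct SVD argument can factor $\mathbf{A}=U\Sigma V^{T}$ and write $[\mathbf{A},\mathbf{0}]=U\Sigma[V^{T},\mathbf{0}]$, then verify that $[V^{T},\mathbf{0}]$ still has orthonormal rows so the expression is an SVD of $[\mathbf{A},\mathbf{0}]$ with the same singular values $\Sigma$. Either route is a few lines. Equality in the inequality is attained when the column spaces of $\mathbf{A}$ and $\mathbf{B}$ are orthogonal, though this sharpness is not needed for the lemma as stated, nor for its use in the subsequent theorem about non-negativity of the regularizer $\mathcal{R}$.
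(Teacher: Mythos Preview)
Your argument is correct. Note that the paper does not itself prove Lemma~\ref{lemma1} (it is quoted from \cite{r45}); however, within the proof of Theorem~\ref{theo1} the paper establishes the $n$-block generalization (P.1) by exactly your mechanism---writing the concatenation as a sum of zero-padded blocks and invoking the triangle inequality for $\|\cdot\|_*$, implicitly using $\|[\mathbf{A};\mathbf{0}]\|_*=\|\mathbf{A}\|_*$---so your proposal aligns with the paper's own reasoning.
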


\begin{theorem} \label{theo1} 
 Let $\mathbf{X}_{k}^{(1)} \mathbf{W}^{(1)}, \mathbf{X}_{k}^{(2)} \mathbf{W}^{(2)}, \cdots, \mathbf{X}_{k}^{(V)} \mathbf{W}^{(V)}$ $(k = 1,2, \cdots,c)$ be matrices with the same column dimension, where $\mathbf{X}_{k}^{(i)}$  is a sub-matrix of $\mathbf{X}^{(i)}(i = 1,2, \cdots,V) $. If (a) $\forall i \in \{1,2,\cdots,V\}$, the vertical concatenation of $\mathbf{X}_{1}^{(i)} \mathbf{W}^{(i)}$ to  $\mathbf{X}_{c}^{(i)} \mathbf{W}^{(i)}$ contains all rows of $\mathbf{X}^{(i)} \mathbf{W}^{(i)}$ and (b) $\forall k,h \in \{1,2,\cdots,c\},k\ne h$, at least one of the intersection between $\mathbf{X}_{k}^{(i)} \mathbf{W}^{(i)}$ and $\mathbf{X}_{h}^{(i)} \mathbf{W}^{(i)}$ is non-empty, then we have 

\begin{equation}\label{Eq6}
\begin{aligned}
& \sum_{k=1}^{c}\left\|[\mathbf{X}_{k}^{(1)} \mathbf{W}^{(1)} ; \mathbf{X}_{k}^{(2)} \mathbf{W}^{(2)} ; \cdots ; \mathbf{X}_{k}^{(V)} \mathbf{W}^{(V)}]\right\| \\
& \geq\left\|[\mathbf{X}^{(1)} \mathbf{W}^{(1)} ; \mathbf{X}^{(2)} \mathbf{W}^{(2)}, \cdots ; \mathbf{X}^{(V)} \mathbf{W}^{(V)}]\right\|_{*}.
\end{aligned}
\end{equation}
\end{theorem}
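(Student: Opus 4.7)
The plan is to reduce the desired inequality to two standard facts about the nuclear norm: sub-additivity under concatenation (Lemma 1) and monotonicity under row augmentation. To keep notation light I would write $M^{(i)} := \mathbf{X}^{(i)}\mathbf{W}^{(i)}$, $M_k^{(i)} := \mathbf{X}_k^{(i)}\mathbf{W}^{(i)}$, $B_k := [M_k^{(1)}; \cdots; M_k^{(V)}]$ and $A := [M^{(1)}; \cdots; M^{(V)}]$, so that the goal becomes $\sum_{k=1}^{c}\|B_k\|_* \ge \|A\|_*$.

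The first step is to lift Lemma 1 from horizontal to vertical concatenation via the identity $\|M\|_* = \|M^\top\|_*$: transposing turns $[A;B]$ into $[A^\top,B^\top]$, so $\|[A;B]\|_* \le \|A\|_* + \|B\|_*$ whenever $A$ and $B$ share a common number of columns. Iterating this across the $c$ labels gives $\sum_{k=1}^{c}\|B_k\|_* \ge \|C\|_*$ where $C := [B_1;B_2;\cdots;B_c]$. Because permuting the rows of a matrix multiplies it on the left by an orthogonal matrix and hence leaves the nuclear norm invariant, I would then reindex $C$ so that rows are grouped first by view and only then by label, obtaining $\tilde C := [\tilde M^{(1)};\cdots;\tilde M^{(V)}]$ with $\tilde M^{(i)} := [M_1^{(i)};\cdots;M_c^{(i)}]$ and $\|\tilde C\|_* = \|C\|_*$.

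Now hypothesis (a) enters: each row of $M^{(i)}$ appears at least once inside $\tilde M^{(i)}$ (and, by (b), some rows appear several times because a sample carries more than one label). Thus, up to row permutation, $A$ is a row-submatrix of $\tilde C$. It remains to prove that deleting rows cannot increase the trace norm, i.e., to show $\|\tilde C\|_* \ge \|A\|_*$. Chaining these facts yields $\sum_{k}\|B_k\|_* \ge \|C\|_* = \|\tilde C\|_* \ge \|A\|_*$ as required.

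The main obstacle, and the only place where a brief calculation is needed, is the monotonicity claim. I would argue as follows: if $A' = \bigl[\begin{smallmatrix} A \\ B \end{smallmatrix}\bigr]$, then $A'^\top A' - A^\top A = B^\top B \succeq 0$, so the Loewner-monotonicity of eigenvalues (Courant--Fischer) gives $\sigma_i(A')^2 = \lambda_i(A'^\top A') \ge \lambda_i(A^\top A) = \sigma_i(A)^2$ for every $i$; summing yields $\|A'\|_* \ge \|A\|_*$. Applying this with $A'$ a permutation of $\tilde C$ and $A$ the embedded row-submatrix closes the argument. Note that condition (b) is actually not used in the inequality itself, but it justifies the way $\tilde M^{(i)}$ legitimately duplicates multi-labelled rows, which is the scenario the theorem is designed for.
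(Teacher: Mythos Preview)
Your proof is correct and follows essentially the same three-step skeleton as the paper's: sub-additivity of the nuclear norm under vertical stacking, permutation invariance to regroup blocks by view, and then monotonicity under row deletion to pass from the tall concatenation to $A$. The only cosmetic differences are that the paper obtains the sub-additivity step via the triangle inequality applied to zero-padded summands rather than by transposing and invoking Lemma~1, and proves the monotonicity step by the one-line identity $\|[A;B]\|_* = \operatorname{tr}\sqrt{A^\top A + B^\top B} \ge \operatorname{tr}\sqrt{A^\top A} = \|A\|_*$ instead of your Courant--Fischer argument; both routes are equivalent. Your closing remark that condition~(b) plays no role in the inequality itself is also accurate---the paper uses it only to phrase the decomposition $[\tilde M^{(i)}] = [M^{(i)}; M^{(i)}_{\mathrm{in}}]$, but condition~(a) alone already guarantees that $A$ is a row-submatrix of $\tilde C$, which is all the argument needs.
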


At the first glance, Theorem \ref{theo1} seems able to be proved directly by extending Lemma \ref {lemma1}, however, in fact, the proof is not that trivial, the detailed proof is shown below. 

\renewcommand{\theequation}{P.\arabic{equation}}
\setcounter{equation}{0}
\begin{proof}  Before proving Theorem \ref{theo1}, firstly, we present the following three propositions.

\begin{equation}     \label{eqA1}
\begin{aligned}
& \left\|\mathbf{A}_{1}\right\|_{*}+\left\|\mathbf{A}_{2}\right\|_{*}+\cdots+\left\|\mathbf{A}_{n}\right\|_{*} \\
\geq & \left\|\left[\mathbf{A}_{1} ; \mathbf{A}_{2} ; \cdots ; \mathbf{A}_{n}\right]\right\|_{*}
\end{aligned}
\end{equation}

\begin{equation}    \label{eqA2}
\begin{aligned}
 & \left\| \left[ \mathbf{A}_{1} ; \mathbf{A}_{2} ; \mathbf{A}_{3} \right] \right\|_{*}=\left\| \left[\mathbf{A}_{1} ; \mathbf{A}_{3} ; \mathbf{A}_{2} \right]\right\|_{*} \\
= & \left\| \left[ \mathbf{A}_{2} ; \mathbf{A}_{1} ; \mathbf{A}_{3} \right] \right\|_{*}=\left\| \left[\mathbf{A}_{2} ; \mathbf{A}_{3} ; \mathbf{A}_{1}\right]\right\|_{*}  \\
= & \left\| \left[ \mathbf{A}_{3} ; \mathbf{A}_{1} ; \mathbf{A}_{2} \right] \right\|_{*}=\left\| \left[\mathbf{A}_{3} ; \mathbf{A}_{2} ; \mathbf{A}_{1}\right]\right\|_{*}
\end{aligned}
\end{equation}

\begin{equation}     \label{eqA3}
\| [\mathbf{A} ; \mathbf{B}] \|_{*} \geq \|\mathbf{A}\|_{*}. 
\end{equation}

\eqref{eqA1} is a generalization of Lemma \ref{lemma1} and can be proved by the following derivation.
\begin{equation}   \nonumber
\begin{aligned}
& \left\|\mathbf{A}_{1}\right\|_{*}+\left\|\mathbf{A}_{2}\right\|_{*}+\cdots+\left\|\mathbf{A}_{n}\right\|_{*} \\
\geq & \left\|\mathbf{A}_{1}+\mathbf{A}_{2}+\cdots+\mathbf{A}_{n}\right\|_{*} (\text {by triangle inequality}) \\
= & \left\|\left[\mathbf{A}_{1} ; \mathbf{0} ; \cdots ; \mathbf{0}\right]+\left[\mathbf{0} ; \mathbf{A}_{2} ; \cdots ; \mathbf{0}\right]+\cdots+\left[\mathbf{0} ; \mathbf{0} ; \cdots ; \mathbf{A}_{n}\right]\right\|_{*}\\
= & \left\|\left[\mathbf{A}_{1} ; \mathbf{A}_{2} ; \cdots ; \mathbf{A}_{n}\right]\right\|_{*}
\end{aligned}
\end{equation}

\eqref{eqA2} is actually the commutative law of the trace norm with respect to the rows (columns). Without loss of generality, we only prove the case of three columns, and it can be directly extended to cases of any number (more than three) of columns.
\begin{equation} \nonumber
\begin{aligned}
& \left\|\mathbf{A}_{1} ; \mathbf{A}_{2} ; \mathbf{A}_{3}\right\|_{*}=t r \sqrt{\left[\mathbf{A}_{1}^{T}, \mathbf{A}_{2}^{T}, \mathbf{A}_{3}^{T}\right]\left[\mathbf{A}_{1} ; \mathbf{A}_{2} ; \mathbf{A}_{3}\right]} \\
= & tr \sqrt{\mathbf{A}_{1}^{T} \mathbf{A}_{1}+\mathbf{A}_{2}^{T} \mathbf{A}_{2}+\mathbf{A}_{3}^{T} \mathbf{A}_{3}} \\
= & tr \sqrt{\mathbf{A}_{1}^{T} \mathbf{A}_{1}+\mathbf{A}_{3}^{T} \mathbf{A}_{3}+\mathbf{A}_{2}^{T} \mathbf{A}_{2}} \\
= & tr \sqrt{\left[\mathbf{A}_{1}^{T}, \mathbf{A}_{3}^{T}, \mathbf{A}_{2}^{T}\right]\left[\mathbf{A}_{1} ; \mathbf{A}_{3} ; \mathbf{A}_{2}\right]}=\left\|\left[\mathbf{A}_{1} ; \mathbf{A}_{3} ; \mathbf{A}_{2}\right]\right\|_{*}
\end{aligned}
\end{equation}
The proof of remaining equations is similar.

\eqref{eqA3} can be easily proved by $\| [\mathbf{A} ; \mathbf{B}] \|_{*}=tr \sqrt{\mathbf{A}^{T} \mathbf{A}+\mathbf{B}^{T} \mathbf{B}} \geq tr \sqrt{\mathbf{A}^{T} \mathbf{A}}=\|\mathbf{A}\|_{*}$. 
  
For simplicity of writing, let $\mathbf{X}_{k}^{(i)} \mathbf{W}^{(i)}=\mathbf{E}_{k}^{(i)}$ and $\mathbf{X}^{(i)} \mathbf{W}^{(i)}=\mathbf{E}^{(i)}$, where $i = 1,2, \cdots,V$ and  $k = 1,2, \cdots,c$ .Then we have,
$\begin{aligned}
& \sum_{k=1}^{c}\left\|[\mathbf{X}_{k}^{(1)} \mathbf{W}^{(1)} ; \mathbf{X}_{k}^{(2)} \mathbf{W}^{(2)} ; \cdots ; \mathbf{X}_{k}^{(V)} \mathbf{W}^{(V)}]\right\|_{*}\\
= &\sum_{k=1}^{c}\left\|[\mathbf{E}_{k}^{(1)} ; \mathbf{E}_{k}^{(2)} ; \cdots ; \mathbf{E}_{k}^{(V)}]\right\|_{*} \\
\geq &\left\|[\mathbf{E}_{1}^{(1)}; \cdots ; \mathbf{E}_{1}^{(V)} ; \mathbf{E}_{2}^{(1)}; \cdots ; \mathbf{E}_{2}^{(V)} ; \cdots ;\mathbf{E}_{c}^{(1)}; \cdots ; \mathbf{E}_{c}^{(V)}]\right\|_{*} \\
= &\left\|[\mathbf{E}_{1}^{(1)} ;  \cdots ; \mathbf{E}_{c}^{(1)} ; \mathbf{E}_{1}^{(2)} ;  \cdots ; \mathbf{E}_{c}^{(2)} ; \cdots ; \mathbf{E}_{1}^{(V)} ;  \cdots ; \mathbf{E}_{c}^{(V)}]\right\|_{*} \\
\geq &\left\|[\mathbf{E}^{(1)} ; \mathbf{E}^{(2)} ; \cdots ; \mathbf{E}^{(V)}]\right\|_{*}\\
= &\left\|[\mathbf{X}^{(1)} \mathbf{W}^{(1)} ; \mathbf{X}^{(2)} \mathbf{W}^{(2)} ; \cdots ; \mathbf{X}^{(V)} \mathbf{W}^{(V)}]\right\|_{*}.
\end{aligned}$

The first inequality holds by \eqref{eqA1}, and the second equality holds by \eqref{eqA2}. In the multi-label setting, the samples corresponding to all the individual labels contain those corresponding to the whole multiple labels, which implies that the condition (a) is satisfied.  Condition (b) is likewise satisfied by the fact that a sample may have multiple labels. Thus, $[\mathbf{E}_{1}^{(i)} ;  \cdots ; \mathbf{E}_{c}^{(i)}]$ can be rewritten as $[ \mathbf{E}_{1}^{(i)} ;  \cdots ; \mathbf{E}_{c}^{(i)} ] = [\mathbf{E}^{(i)}; \mathbf{E}_{in}^{(i)}]$,  where  $\mathbf{E}_{in}^{(i)}$ is the matrix consisting of the intersections of $\mathbf{E}_{k}^{(i)}$ w.r.t. $k$. Then the last inequality holds by \eqref{eqA3}. At this point, the proof is complete. 
\end{proof}
\noindent {\textbf {Remark 2.}} 
A similar method termed as DM2L has been proposed in \cite{r46}. Major differences between this work and DM2L are as follows:

(1) This work generalizes DM2L to the multi-view setting and mainly focuses on the novel and unique challenge posed by non-aligned multiple views, which widely exist in reality while often being neglected. In brief, DM2L can be regarded as a special case of this work.

(2) Albeit the model of DM2L seems similar to ours, the problem addressed in this paper is much more challenging. More importantly, the motivations of these two are different. DM2L claims that the multi-label matrix is low-rank and the negative trace norm term in their model aims at making the DM2L more discriminative, whereas we argue that the multi-label matrix is NOT necessarily low-rank, conversely, high-rank as analyzed before. And it is the latter that we utilize the negative trace norm to directly characterize this property, making our motivations more intuitive and interpretable.

(3) We tailor an efficient optimization method to solve the proposed model. Specifically, we derive an ADMM algorithm with a closed form solution of each sub-problem. The linear computational complexity with respect to the number of samples allows our model to handle large-scale data. However, high efficiency, closed form solution, and the ability to handle large-scale data cannot be guaranteed in DM2L by using traditional convex optimization method. Besides, there are also other differences between NAIM$^3$L and DM2L, for example, we make a thorough computational complexity analysis; the non-negativity of the regularization term needed to be proved is more difficult than that of DM2L and we provide a more concise proof (see details above); we design a more efficient algorithm for computing the sub-gradient of the trace norm.

\section{Optimization} \label{secIV}
\subsection{ADMM  Algorithm}
\renewcommand{\theequation}{\arabic{equation}}
\setcounter{equation}{8}

For the convenience of optimization, we first introduce some notations to simplify the formulas. Let
$
\mathbf{P}=\left[\begin{array}{c}
\mathbf{P}^{(1)} \\
\mathbf{P}^{(2)} \\
\vdots \\
\mathbf{P}^{(V)}
\end{array}\right]
$,  
$
\mathbf{W}=\left[\begin{array}{c}
\mathbf{W}^{(1)} \\
\mathbf{W}^{(2)} \\
\vdots \\
\mathbf{W}^{(V)}
\end{array}\right]
$,
$
\mathbf{X}=\left[\begin{array}{cccc}
\mathbf{X}^{(1)} & \bf 0 &\cdots& \bf 0 \\
\bf 0 & \mathbf{X}^{(2)} &\cdots& \bf 0 \\
\vdots & \vdots & \ddots &\vdots\\ 
\bf 0 & \bf 0 &\cdots& \mathbf{X}^{(V)} 
\end{array}\right]
$,  
$
\mathbf{Y}=\left[\begin{array}{c}
\mathbf{Y}^{(1)} \\
\mathbf{Y}^{(2)} \\
\vdots \\
\mathbf{Y}^{(V)}
\end{array}\right]
$,
and
$
\mathbf{X}_k=\left[\begin{array}{cccc}
\mathbf{X}^{(1)}_k & \bf 0 & \cdots & \bf 0 \\
\bf 0 & \mathbf{X}^{(2)}_k & \cdots & \bf 0 \\
\vdots & \vdots & \ddots & \vdots \\ 
\bf 0 & \bf 0 & \cdots & \mathbf{X}^{(V)}_k 
\end{array}\right]
$,
then Eq. \eqref{Eq5} can be simplified as:

\begin{equation}\label{Eq7}
\begin{aligned}
\min _{\mathbf{w}} &\frac{1}{2}\|\mathbf{P} \odot(\mathbf{X} \mathbf{W}-\mathbf{Y})\|_{F}^{2} \\
&+\lambda\left(\sum_{k=1}^{c}\left\|\mathbf{X}_{k} \mathbf{W}\right\|_{*}-\|\mathbf{X} \mathbf{W}\|_{*}\right),
\end{aligned}
\end{equation}
Eq. \eqref{Eq7} is a DC (Difference of Convex functions) programming, and it can be solved by the ConCave-Convex Procedure (CCCP). Let $ f = J_{cvx}+J_{cav } $ ,
\begin{equation}\label{Eq8}
J_{cvx}=\frac{1}{2}\|\mathbf{P} \odot(\mathbf{X} \mathbf{W}-\mathbf{Y})\|_{F}^{2}+\lambda \sum_{k=1}^{c}\left\|\mathbf{X}_{k} \mathbf{W}\right\|_{*},
\end{equation} 

\begin{equation}\label{Eq9}
J_{cav}=-\lambda\|\mathbf{X} \mathbf{W}\|_{*},
\end{equation}
where $J_{cvx}$ is a convex function and   $J_{cav}$ is a concave function. 

Then by CCCP we have,
\begin{equation}\label{Eq10}
\partial J_{cvx}\left(\mathbf{W}_{t}\right)+\partial J_{cav}\left(\mathbf{W}_{t-1}\right)=0,
\end{equation}
where $\partial J_{cvx}\left(\mathbf{W}_{t}\right)$ is the sub-gradient of $J_{cvx}\left(\mathbf{W}_{t}\right)$  and  $\mathbf{W}_{t}$ is the matrix of the $t$-th iteration. Afterwards, a surrogate objective function  $J$  that satisfies Eq. \eqref{Eq10} can be derived,
\begin{equation}\label{Eq11}
\begin{aligned}
\min _{\mathbf{w}_{t}} J\left(\mathbf{W}_{t}\right)=& \min _{\mathbf{w}_{t}} \frac{1}{2}\left\|\mathbf{P} \odot\left(\mathbf{X} \mathbf{W}_{t}-\mathbf{Y}\right)\right\|_{F}^{2}+\lambda \sum_{k=1}^{c}\left\|\mathbf{X}_{k} \mathbf{W}_{t}\right\|_{*} \\
&-\lambda t r\left[\mathbf{W}_{t}^{T}\left(\partial\left\|\mathbf{X} \mathbf{W}_{t-1}\right\|_{*}\right)\right],
\end{aligned}
\end{equation} 
Eq. \eqref{Eq11} is a convex function w.r.t. $\mathbf{W}_{t}$ and can be solved by off-the-shelf convex optimization toolkit.

However, traditional convex optimization methods often need to search the directions of the gradient, which makes it slow to obtain the optimum. Thus, we tailor an efficient ADMM algorithm and derive the closed form solution of each sub-problem. Specifically, let $\mathbf{Z}_{k}=\mathbf{X}_{k} \mathbf{W}_{t}$, we have the following augmented Lagrangian function,
\begin{equation}\label{Eq12}
\begin{aligned}
\Phi=& \frac{1}{2}\left\|\mathbf{P} \odot\left(\mathbf{X} \mathbf{W}_{t}-\mathbf{Y}\right)\right\|_{F}^{2}+\lambda \sum_{k=1}^{c}\left\|\mathbf{Z}_{k} \right\|_{*} \\
&-\lambda tr \left[\left(\mathbf{X} \mathbf{W}_{t}\right)^{T} \partial\left(\left\|\mathbf{X} \mathbf{W}_{t-1}\right\|_{*}\right)\right] \\
&+\sum_{k=1}^{c} tr \left[\mathbf{\Lambda}_{k}^{T}\left(\mathbf{X}_{k} \mathbf{W}_{t}-\mathbf{Z}_{k}\right)\right]+\frac{\mu}{2} \sum_{k=1}^{c}\left\|\mathbf{Z}_{k}-\mathbf{X}_{k} \mathbf{W}_{t}\right\|_{F}^{2}
\end{aligned},
\end{equation}
where $\mathbf{\Lambda}_{k}$ is the Lagrangian multiplier and $\mu$ is the penalty factor. Note that, $\mu$ is NOT a model hyper-parameter BUT a parameter of the ADMM algorithm that does not need to be adjusted, and it is introduced for the convenience of optimization. Specifically, in Eq. \eqref{Eq12}, with the equipment of the last term, each sub-problem of the ADMM algorithm becomes strongly convex, which guarantees a fast convergence. In the experiments, we will validate that the performance of our model remains unchanged under different $\mu$.

\subsubsection{Sub-problem of $\ \mathbf{W}_{t}$}

With $\mathbf{Z}_{k}$ and $\mathbf{\Lambda}_{k}$ fixed, $\mathbf{W}_{t}$ can be updated by
\begin{equation}\label{Eq13}
\begin{aligned}
\mathbf{W}_{t}&=(\mu \sum_{k=1}^{c} \mathbf{X}_{k}^{T} \mathbf{X}_{k})^{-1}\{\lambda \mathbf{X}^{T} \partial(\|\mathbf{X} \mathbf{W}_{t-1}\|_{*}) \\
&+\sum_{k=1}^{c}[\mathbf{X}_{k}^{T}(\mu \mathbf{Z}_{k}-\mathbf{\Lambda}_{k})]-\mathbf{X}^{T}[\mathbf{P} \odot(\mathbf{X} \mathbf{W}_{t-1}-\mathbf{Y})]\}
\end{aligned}.
\end{equation}

\subsubsection{Sub-problem of $\ \mathbf{Z}_{k}$}

With $\mathbf{W}_{t}$ and $\mathbf{\Lambda}_{k}$ fixed, the objective function of $\mathbf{Z}_{k}$ can be written as:
\begin{equation}\label{Eq14}
\Phi_{\mathbf{Z}_{k}}=\frac{\lambda}{\mu}\left\|\mathbf{Z}_{k}\right\|_{*}+\frac{1}{2}\left\|\mathbf{Z}_{k}-\left(\mathbf{X}_{k} \mathbf{W}_{t}+\frac{\mathbf{\Lambda}_{k}}{\mu}\right)\right\|_{F}^{2}.
\end{equation}
The above problem can be solved by the singular value thresholding algorithm \cite {r47}, and the update rule of $\mathbf{Z}_{k}$ is,
\begin{equation}\label{Eq15}
\begin{aligned}
\mathbf{Z}_{k} &=\Gamma\left(\mathbf{X}_{k} \mathbf{W}_{t}+\frac{\boldsymbol{\Lambda}_{k}}{\mu}\right) \\
&=\mathbf{U} \operatorname{Diag}\left[\left(\sigma_{i}-\frac{\lambda}{\mu}\right)_{+} \right]\mathbf{V}^{T},
\end{aligned}
\end{equation}
where $\Gamma$ is a singular value threshold operator, $\mathbf{U}$ and $\mathbf{V}$ are the matrices of left and right singular vectors of  $\mathbf{X}_{k} \mathbf{W}_{t}+\frac{\mathbf\Lambda_{k}}{\mu}$, $\operatorname{Diag}$ stands for the diagonal matrix, $\sigma_{i}$ is the $i$-th largest singular value and $a_{+}=\max (0, a)$.
\subsubsection{Sub-problem of $\ \mathbf{\Lambda}_{k}$}

With $\mathbf{Z}_{k}$ and $\mathbf{W}_{t}$ fixed, $\mathbf{\Lambda}_{k}$ can be updated by
\begin{equation}\label{Eq16}
\boldsymbol{\Lambda}_{k} \leftarrow \boldsymbol{\Lambda}_{k}+\mu\left(\mathbf{X}_{k} \mathbf{W}_{t}-\mathbf{Z}_{k}\right).
\end{equation}
The entire optimization procedure is summarized in the Algorithm 1. 

\begin{algorithm} \label{Alg1}
\caption{ADMM Algorithm for NAIM$^3$L}
\textbf{Input}: Feature matrix $\mathbf{X}$, observed label matrix $\mathbf{Y}$, indicator matrix $\mathbf{P}$\\
\textbf{Initialization}:  Randomly initialize $\mathbf{W}_0$, $\mathbf{Z}_k = \bf 0$, and  $\mathbf{\Lambda}_{k} = \bf 0$ $(k =1, 2, \cdots, c),  \mu = 5. $\\
\textbf{Output}: $\mathbf{W}$
\begin{algorithmic}[1] 
\STATE Let $t=0$.
\WHILE{not converge}
\STATE $t =  t + 1$.
\STATE Update $\mathbf{W}_t$ by Eq. \eqref{Eq13}. 
\STATE Update $\mathbf{Z}_k$ by Eq. \eqref{Eq15}.
\STATE Update $\mathbf{\Lambda}_{k}$ by Eq. \eqref{Eq16}.
\ENDWHILE
\STATE \textbf{return} $\mathbf{W}$
\end{algorithmic}
\end{algorithm}

\subsection{An Efficient Algorithm for Computing $\partial\|\bullet \|_{*}$}
Let $ \mathbf {A} \in \mathbb{R}^{n \times c}$ be an arbitrary matrix and $\mathbf{U} \boldsymbol{\Sigma} \mathbf{V}^{T}$ be its singular value decomposition (SVD), where $\mathbf {U}$ and $\mathbf {V}$ are the   matrices of left and right singular vectors, respectively. It is well known \cite {r47,r48} that the sub-gradient of the trace norm can be computed by
$\partial\|\mathbf{A}\|_{*}=\left\{\mathbf{U} \mathbf{V}^{T}+\mathbf{Q}| \quad \mathbf{Q} \in \mathbb{R}^{n \times c}, \mathbf{U}^{T} \mathbf{Q}=\mathbf{0},\mathbf{Q} \mathbf{V}=\mathbf{0}, \|\mathbf{Q}\|_{2} \leq 1\right\}$, where $\|\bullet \|_{2}$ is the spectral norm.

For simplicity, let $\mathbf{Q}= \mathbf{0}$ , then $\partial\|\mathbf{A}\|_{*}$ can be computed by $\partial\|\mathbf{A}\|_{*}=\mathbf{U} \mathbf{V}^{T}$. 
In the following, we will derive a theorem and then design an efficient algorithm to compute $\partial\|\mathbf{A}\|_{*}$ based on the theorem.

\begin{theorem} \label{theo2} 
Let $\mathbf {A} = \mathbf{U} \boldsymbol{\Sigma} \mathbf{V}^{T}$ be the SVD of matrix $\mathbf{A}$, then $\mathbf{A}^{T} \mathbf{A}=\mathbf{V} \mathbf{\Sigma}^2 \mathbf{V}^{T} = \mathbf{V} \mathbf{S} \mathbf{V}^{T}$ is the eigenvalue decomposition of the matrix $\mathbf{A}^{T} \mathbf{A}$, and $\partial\|\mathbf{A}\|_{*}=\mathbf{U} \mathbf{V}^{T} = \mathbf{A} \mathbf{V} \mathbf{S}^{-\frac{1}{2}} \mathbf{V}^{T} $.
\end{theorem}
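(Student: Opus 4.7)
The plan is to verify the two claims by direct substitution into the SVD of $\mathbf{A}$, and then chain the identities to recover $\mathbf{U}\mathbf{V}^T$ without ever forming $\mathbf{U}$ explicitly.

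First, I would establish the eigendecomposition claim. Starting from $\mathbf{A} = \mathbf{U}\boldsymbol{\Sigma}\mathbf{V}^T$, compute
\begin{equation*}
\mathbf{A}^T\mathbf{A} = \mathbf{V}\boldsymbol{\Sigma}^T\mathbf{U}^T\mathbf{U}\boldsymbol{\Sigma}\mathbf{V}^T = \mathbf{V}\boldsymbol{\Sigma}^2\mathbf{V}^T,
\end{equation*}
using $\mathbf{U}^T\mathbf{U}=\mathbf{I}$ from the (economy) SVD. Since $\mathbf{V}$ is orthogonal and $\boldsymbol{\Sigma}^2$ is diagonal with non-negative entries $\sigma_i^2$, this is indeed an eigendecomposition of the symmetric positive semi-definite matrix $\mathbf{A}^T\mathbf{A}$, and identifying $\mathbf{S}=\boldsymbol{\Sigma}^2$ gives $\mathbf{A}^T\mathbf{A}=\mathbf{V}\mathbf{S}\mathbf{V}^T$.

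Next, for the second identity, I would observe that $\mathbf{S}^{-1/2}$ must be interpreted as the diagonal matrix with entries $1/\sigma_i$ on the support of the non-zero singular values (i.e., the Moore--Penrose pseudoinverse of $\boldsymbol{\Sigma}$ when $\mathbf{A}$ is rank-deficient), so that $\boldsymbol{\Sigma}\mathbf{S}^{-1/2}$ acts as the identity on the range and as zero on the null space. Then
\begin{equation*}
\mathbf{A}\mathbf{V}\mathbf{S}^{-1/2}\mathbf{V}^T = \mathbf{U}\boldsymbol{\Sigma}\mathbf{V}^T\mathbf{V}\mathbf{S}^{-1/2}\mathbf{V}^T = \mathbf{U}\boldsymbol{\Sigma}\mathbf{S}^{-1/2}\mathbf{V}^T = \mathbf{U}\mathbf{V}^T,
\end{equation*}
by orthogonality of $\mathbf{V}$ and the definition of $\mathbf{S}^{-1/2}$. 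The expression on the left-hand side coincides with the sub-gradient formula $\partial\|\mathbf{A}\|_* = \mathbf{U}\mathbf{V}^T$ recalled above the theorem (taking $\mathbf{Q}=\mathbf{0}$).

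The only real subtlety, and what I would expect to be the main obstacle, is the handling of zero singular values: writing $\mathbf{S}^{-1/2}$ naively requires $\mathbf{A}$ to have full column rank, so the proof should explicitly adopt the pseudoinverse convention or restrict to the compact SVD so that $\boldsymbol{\Sigma}$ is square and invertible. Once this convention is fixed, the proof is a one-line manipulation and, more importantly, yields an implementable algorithm: one performs a single symmetric eigendecomposition of the (typically small) $c\times c$ matrix $\mathbf{A}^T\mathbf{A}$ to obtain $\mathbf{V}$ and $\mathbf{S}$, and then recovers the sub-gradient via the product $\mathbf{A}\mathbf{V}\mathbf{S}^{-1/2}\mathbf{V}^T$, avoiding the more expensive full SVD of the potentially tall $n\times c$ matrix $\mathbf{A}$.
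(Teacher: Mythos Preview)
Your proof is correct and follows essentially the same direct-substitution route as the paper: both compute $\mathbf{A}^T\mathbf{A}=\mathbf{V}\boldsymbol{\Sigma}^2\mathbf{V}^T$ via $\mathbf{U}^T\mathbf{U}=\mathbf{I}$, then plug the SVD into $\mathbf{A}\mathbf{V}\mathbf{S}^{-1/2}\mathbf{V}^T$ and use $\mathbf{V}^T\mathbf{V}=\mathbf{I}$ together with $\boldsymbol{\Sigma}\,\mathbf{S}^{-1/2}=\mathbf{I}$ to recover $\mathbf{U}\mathbf{V}^T$. Your explicit remark that $\mathbf{S}^{-1/2}$ should be read as a pseudoinverse when $\mathbf{A}$ is rank-deficient is in fact more careful than the paper, which simply writes $\boldsymbol{\Sigma}^{-1}$ without comment.
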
 
\begin{proof}
$\mathbf{A}=\mathbf{U} \Sigma \mathbf{V}^{T}$, then $\mathbf{A}^{T} \mathbf{A}=\mathbf{V} \mathbf{\Sigma} \mathbf{U}^{T} \mathbf{U} \Sigma \mathbf{V}^{T} =\mathbf{V} \mathbf{\Sigma}^2 \mathbf{V}^{T}$.
Let $\mathbf{S} = \mathbf{\Sigma}^2$, then $\mathbf{A}^{T} \mathbf{A}=\mathbf{V} \mathbf{S} \mathbf{V}^{T}$ is the eigenvalue decomposition of the matrix $\mathbf{A}^{T} \mathbf{A}$.

Rewrite $\mathbf{A}^{T} \mathbf{A} =\mathbf{V} \mathbf{\Sigma}^2 \mathbf{V}^{T} =\mathbf{V} \mathbf{\Sigma} \mathbf{V}^{T} \mathbf{V} \mathbf{\Sigma} \mathbf{V}^{T}$,
then $\left(\mathbf{A}^{T} \mathbf{A}\right)^{-\frac{1}{2}}=\left(\mathbf{V} \mathbf{\Sigma} \mathbf{V}^{T}\right)^{-1}=\mathbf{V} \mathbf{\Sigma}^{-1} \mathbf{V}^{T} = \mathbf{V} \mathbf{S}^{-\frac{1}{2}} \mathbf{V}^{T}$ and $ \mathbf{A}\left(\mathbf{A}^{T} \mathbf{A}\right)^{-\frac{1}{2}}=\mathbf{U} \Sigma \mathbf{V}^{T} \mathbf{V} \mathbf{\Sigma}^{-1} \mathbf{V}^{T} =\mathbf{U} \mathbf{V}^{T}$.

Finally, $\partial\|\mathbf{A}\|_{*}=\mathbf{U} \mathbf{V}^{T} = \mathbf{A} \mathbf{V} \mathbf{S}^{-\frac{1}{2}} \mathbf{V}^{T} $.
\end{proof}

According to Theorem \ref{theo2}, we can now design an efficient algorithm for computing $\partial\|\mathbf{A} \|_{*}$ by transforming the SVD of an $n \times c$ matrix into the eigenvalue decomposition of a $c \times c$ matrix.

If the full SVD is adopted to compute $\partial\|\mathbf{A} \|_{*}$, the whole time complexity is $\mathcal{O}\left(n c^{2} + c n^{2} \right) + \mathcal{O}\left( mrc\right)$, where $r$ is the rank of matrix $\mathbf{A}$. However, the time complexity of Algorithm 2 is $\mathcal{O}\left(n c^{2} \right) + \mathcal{O}\left( c^{3} \right)+ \mathcal{O}\left( n c^{2} \right)$. Generally, in the multi-label setting, the number of samples is much larger than that of multiple labels, i.e., $n\gg c$, so the time complexity of adopting full SVD is $\mathcal{O}\left( c n^{2} \right)$ whereas the time complexity of Algorithm 2 is $\mathcal{O}\left( n c^{2} \right)$, which is much more efficient. Algorithm 2 can be more efficient if a more sophisticated algorithm is elaborately customized. However, it is not the main focus of this work but just a byproduct, which itself also has an interest of independence to great extent. We argue that Algorithm 2 can be implemented quite easily with a few lines of codes in Matlab. More importantly, it is effective enough for us to handle large-scale datasets by the fact that the computation complexity with respect to the number of samples reduces from quadratic to linear. Regarding the ''efficient algorithm'' mentioned here, what we aim to emphasize is that Algorithm 2 can handle large-scale datasets, instead of that it can defeat the state-of-the-arts SVD algorithms. Experiments in section \ref{5.9} will validate the efficiency of Algorithm 2.

\begin{algorithm} [t] \label{Alg2}  
\caption{An Efficient Algorithm for Computing $\partial\|\bullet \|_{*}$}
\textbf{Input}: A matrix $ \mathbf {A} \in \mathbb{R}^{n \times c}$, \\
\textbf{Output}: $\partial\|\mathbf {A}||_{*}$.
\begin{algorithmic}[1] 
\IF     {$ n \ge c $}
\STATE  $\mathbf{B}=\mathbf{A}^{T} \mathbf{A}$.
\ELSE
\STATE  $\mathbf{B}=\mathbf{A} \mathbf{A}^{T}$.
\ENDIF
\STATE Eigenvalue decomposition of $\mathbf{B}$, $\mathbf{B}= \mathbf{V} \mathbf{S} \mathbf{V}^{T}$.
\STATE $\partial\|\mathbf {A}||_{*}= \mathbf{A} \mathbf{V} \mathbf{S}^{-\frac{1}{2}} \mathbf{V}^{T}$. 
\STATE \textbf{return} $\partial\|\mathbf {A}||_{*}$
\end{algorithmic}
\end{algorithm}

\subsection{Convergence and Complexity Analysis} \label{IV-C}
\emph{\textbf{Convergence Analysis}}. Before giving the convergence analysis of Algorithm 1, we introduce the following lemma.
\begin{lemma} \label{lemma2}  {\rm \cite{r49}}
 Consider an energy function $J(x)$ of form $J(x) = J_{cvx}(x) + J_{cav}(x)$, where $J_{cvx}(x)$, $J_{cav}(x)$ are convex and concave functions of $x$, respectively. Then the discrete iterative CCCP algorithm $ x^{t}\longmapsto x^{t+1}$ given by
\begin{equation} \nonumber
\nabla J_{cvx}(x^{t+1}) = - \nabla J_{cav}(x^{t})
\end{equation}
is guaranteed to monotonically decrease the energy $J(x)$ as a function of time and
hence to converge to a minimum or saddle point of $J(x)$.
\end{lemma}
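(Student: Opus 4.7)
The plan is to establish monotonic decrease through a linearization argument and then invoke the fixed-point characterization of the CCCP update. The key idea is that the update rule $\nabla J_{cvx}(x^{t+1}) = -\nabla J_{cav}(x^{t})$ is exactly the first-order optimality condition for minimizing the surrogate
\begin{equation*}
H_{t}(x) = J_{cvx}(x) + J_{cav}(x^{t}) + \nabla J_{cav}(x^{t})^{T}(x - x^{t}),
\end{equation*}
which is strictly convex in $x$ (the concave part has been replaced by its tangent hyperplane at $x^{t}$). Hence $x^{t+1} = \arg\min_{x} H_{t}(x)$.

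Next I would prove $J(x^{t+1}) \leq J(x^{t})$ in two steps. First, by concavity of $J_{cav}$, the tangent at $x^{t}$ lies above the graph, so $J_{cav}(x^{t+1}) \leq J_{cav}(x^{t}) + \nabla J_{cav}(x^{t})^{T}(x^{t+1} - x^{t})$, which gives $J(x^{t+1}) \leq H_{t}(x^{t+1})$. Second, since $x^{t+1}$ minimizes $H_{t}$, we have $H_{t}(x^{t+1}) \leq H_{t}(x^{t}) = J_{cvx}(x^{t}) + J_{cav}(x^{t}) = J(x^{t})$. Chaining the two inequalities yields monotonic descent, and the inequalities are strict unless $x^{t+1} = x^{t}$ (by strict convexity of $H_{t}$, assuming $J_{cvx}$ is strictly convex; otherwise equality on an affine subspace is harmless).

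From monotone descent, assuming $J$ is bounded below, the sequence $\{J(x^{t})\}$ converges. For any limit point $x^{*}$ of $\{x^{t}\}$, the continuity of $\nabla J_{cvx}$ and $\nabla J_{cav}$ together with the update rule forces $\nabla J_{cvx}(x^{*}) = -\nabla J_{cav}(x^{*})$, i.e., $\nabla J(x^{*}) = 0$. This is exactly the stationarity condition, so $x^{*}$ is a minimum or a saddle point of $J$, completing the argument.

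The main obstacle I expect is the convergence step rather than the descent step: descent follows cleanly from the tangent inequality, but to pass from ``$J(x^{t})$ converges'' to ``iterates converge to a stationary point'' one needs either compactness of the sublevel sets of $J$, coercivity, or an explicit argument that $\|x^{t+1}-x^{t}\|\to 0$ (for instance via strong convexity of $J_{cvx}$, which would also give a quantitative per-iteration decrease). In the setting of \eqref{Eq7} this is benign since the quadratic loss plus trace-norm penalty makes $J_{cvx}$ coercive, but in full generality one should be explicit about the regularity assumptions; citing \cite{r49} sidesteps these technicalities.
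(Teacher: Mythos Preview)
The paper does not prove this lemma; it is quoted verbatim as a cited result from \cite{r49} and used as a black box in the convergence analysis of Section~\ref{IV-C}. Your argument is the standard majorization--minimization proof of CCCP descent (linearize the concave part, minimize the resulting convex surrogate, then use the tangent inequality for concave functions), and it is correct. Since there is no in-paper proof to compare against, the only remark is that your last paragraph already identifies precisely the gap between ``$J(x^t)$ converges'' and ``iterates converge to a stationary point''; the original source \cite{r49} handles this under additional smoothness/compactness hypotheses, which the present paper simply inherits by citation.
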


According to Lemma \ref{lemma2} and Eq. \eqref{Eq10}, we can derive that $J_{cvx}\left(\mathbf{W}_{t}\right)+J_{cav}\left(\mathbf{W}_{t}\right) \leq J_{cvx}\left(\mathbf{W}_{t-1}\right)+J_{cav}\left(\mathbf{W}_{t-1}\right)$, which means the objective function $f$ is guaranteed to monotonically decrease. Moreover, according to Theorem \ref{theo1}, it is easy to validate that the objective function $f$ is lower bounded by 0. Thus, $f$ is guaranteed to converge by the above two facts. Besides, in the ADMM algorithm, the surrogate objective function  $J$  is strongly convex, which guarantees a global optimum of each sub-problem. Therefore, Algorithm 1 is guaranteed to converge.  

\emph{\textbf {Complexity Analysis}}. The time complexity of NAIM$^3$L is dominated by matrix multiplication and inverse operations. In each iteration, the complexity of updating $\mathbf{W}_t$ in Eq. \eqref{Eq13} is $\mathcal{O}\left[V(n d_{\max }^{2} c+d_{\max }^{3}+n d_{\max } c+n c^{2}+n d_{\max } c^{2}\right)]$ and the complexity of updating $\mathbf{Z}_k$ in Eq. \eqref{Eq15} is $\mathcal{O}\left[V(n c^{3} + n d_{\max } c^{2}\right)]$. The update of  $\mathbf{\Lambda}_{k}$ in Eq. \eqref{Eq16} costs $\mathcal{O}\left(Vn d_{\mathrm{max}} c^{2}\right)$. Generally, $n>d_{\max }$, $n \gg c$  and $d_{\max } > c$, so the total complexity of NAIM$^3$L is $\mathcal{O}\left(t Vn d_{\max }^{2} c\right)$, where $t$ is number of iterations, $V$ is the number of views, $n$ is the number of samples, $d_{\max }$ is the maximum dimension of the features and $c$ is the number of multiple labels. Therefore, NAIM$^3$L has linear computational complexity with respect to the number of samples, which enables it more efficiently to handle large scale data.

\section{Experiments}     \label{secV}
\subsection{Experimental Settings}
\textbf {Datasets:} Five real datasets Corel5k, Espgame, IAPRTC12, Mirflickr, and Pascal07 are used in our experiments. They are available at website\footnote{\url{ http://lear.inrialpes.fr/people/guillaumin/data.php}} \cite{r50}. For fairness, we use exactly the same settings as in iMVWL \cite {r28}. Specifically, each dataset involves six views: HUE, SIFT, GIST, HSV, RGB, and LAB. For each dataset, we randomly sample 70\% of the data for training and use the remaining 30\% data for testing (unlabeled data). Furthermore, in the incomplete multi-view setting, we randomly remove $\alpha$ samples in each view while ensuring each sample appears in at least one view; in the missing labels setting, for each label, we randomly remove $\beta$ positive and negative tags of the training samples; in the non-aligned multi-view setting, samples of all views are randomly arranged and totally unpaired as defined in subsection \ref{III-A}. During the process of training and learning, the alignment information of the samples (orders of the samples) is completely unknown to us. The statistics of the datasets are summarized in Table \ref{tab1}. From this table, we can find that the label matrices are of full column rank or high rank.
\begin{table}[h]
\centering
\caption{Statistics of the Datasets. {\upshape n} and {\upshape c} are the Numbers of Samples and Multiple Labels in Each Dataset,  and \#{\upshape avg} is the Average Number of Relevant Labels in Each Sample. {\upshape train\_rank} and {\upshape test\_rank} are the Ranks of the Label Matrices of Training Set and Testing Set.} 
\label{tab1}
\begin{tabular}{@{}cccccc@{}} 
\toprule
datasets  & n     & c   & \#avg   & train\_rank  & test\_rank    \\ \midrule
Corel5k   & 4999  & 260 & 3.396   &     259     &   249        \\
Espgame   & 20770 & 268 & 4.686   &     268     &   268        \\
IAPRTC12  & 19627 & 291 & 5.719   &     291     &   291        \\
Mirflickr & 25000 & 38  & 4.716   &      38     &    38        \\
Pascal07  & 9963  & 20  & 1.465   &      20     &    20        \\ \bottomrule
\end{tabular}
\end{table}

\textbf {Compared methods:} In the experiments, NAIM$^3$L is compared with five state-of-the-art methods: iMSF \cite {r51}, LabelMe \cite {r39}, MVL-IV \cite {r31}, lrMMC \cite {r40}, and iMVWL \footnote{We sincerely thank the authors of iMVWL for providing the codes, however, their codes did not fix the random seeds, thus results in their original paper cannot be reproduced. We run their codes by using the optimal hyper-parameters suggested in their paper and fix the same random seeds as in our codes. Note that, half of the results of iMVWL are better than those reported in their original paper.} \cite {r28}. iMSF is a multi-class learning method, and we extend it for multi-label classification by training multiple classifiers (one for each label). IMVL-IV \cite {r29} is an incomplete multi-view and missing multi-label learning method, however, it contains ten hyper-parameters in its model, making it very difficult to tune for the optimum. For fairness, we omit it. Besides, there are also some deep neural network (DNN) based methods concerning the complete multi-view multi-label learning \cite{r3, r13}. In this work, we mainly focus on the novel problem of the non-aligned views with missing multiple labels and provide a simple yet effective solution. Considering that our model can directly cooperate with DNN, comparisons with these methods will be conducted in our future work.  It is worth noting that all the compared methods cannot deal with non-aligned views, thus in the experiments, they are all implemented only with the missing labels and incomplete multi-view settings, whereas our NAIM$^3$L is conducted in the settings with all the three challenges. Optimal parameters for the competitive methods are selected as suggested in the corresponding papers. All experiments are repeated ten times, and both the mean and standard deviation are reported.

\textbf {Evaluation metrics:} Similar to iMVWL, four widely used multi-label evaluation metrics are adopted for performance evaluations, i.e., Ranking Loss (RL), Average Precision (AP), Hamming Loss (HL), and adapted Area Under ROC Curve (AUC). A formal definition of the first three metrics can be found in \cite {r9}. The adapted AUC is suggested in \cite {r52}. For consistency, in our experiments, we report 1-RL and 1-HL instead of RL and HL. Thus, the larger the values of all four metrics are, the better the performance is.

\subsection{Main Experimental Results} \label{5.2}
\subsubsection{Experiments under Incomplete Views, Missing Labels (and non-Aligned Views)}
In this subsection, we show the results under the settings of incomplete views, missing labels, and non-aligned views. To the best of our knowledge, existing methods cannot work under the non-aligned views setting. For this reason, the compared methods are all implemented only with the incomplete views and missing labels settings. However, our NAIM$^3$L is conducted in all three settings, which means that in the experiments, information available for NAIM$^3$L is much less than other methods.
Table \ref{tab2} shows the results compared with other methods under the setting of 50\% incomplete views and 50\% missing positive and negative labels. From this table, we can see that, even though without view alignment information, NAIM$^3$L still outperforms all the compared methods with view alignment information on five datasets. Note that, in these experiments, other methods are under the aligned-views setting, whereas ours are not. Nonetheless, with less available information and without view completion, NAIM$^3$L still achieves better performance. This can be attributed to the joint consideration of the local low-rank and global high-rank structures within multiple labels while the latter is almost completely neglected in other methods. In subsection \ref{V-D}, we will conduct extensive experiments to validate the importance of the global high-rank structure of multiple labels.


\begin{table*}[h]
\centering
\caption{Results on all Five Datasets with the Ratio of Incomplete Multi-view $\alpha = 50$\% and the Ratio of Missing Multi-label $\beta = 50$\%. Values in Parentheses Represent the Standard Deviation, and all the Values are Displayed as Percentages.}
\label{tab2}
\begin{tabular}{@{}cccccccc@{}}
\toprule
dataset                    & metrics  & lrMMC       & MVL-IV      & LabelMe     & iMSF        & iMVWL       & NAIM$^3$L      \\ \midrule
\multirow{4}{*}{Corel5k}   & 1-HL(\%) & 95.40(0.00) & 95.40(0.00) & 94.60(0.00) & 94.30(0.00) & 97.84(0.02) & \textbf{98.70}(0.01) \\
                           & 1-RL(\%) & 76.20(0.20) & 75.60(0.10) & 63.80(0.30) & 70.90(0.50) & 86.50(0.33) & \textbf{87.84}(0.21) \\
                           & AP(\%)   & 24.00(0.20) & 24.00(0.10) & 20.40(0.20) & 18.90(0.20) & 28.31(0.72) & \textbf{30.88}(0.35) \\
                           & AUC(\%)  & 76.30(0.20) & 76.20(0.10) & 71.50(0.10) & 66.30(0.50) & 86.82(0.32) & \textbf{88.13}(0.20) \\ \midrule
\multirow{4}{*}{Pascal07}  & 1-HL(\%) & 88.20(0.00) & 88.30(0.00) & 83.70(0.00) & 83.60(0.00) & 88.23(0.38) & \textbf{92.84}(0.05) \\ 
                           & 1-RL(\%) & 69.80(0.30) & 70.20(0.10) & 64.30(0.40) & 56.80(0.00) & 73.66(0.93) & \textbf{78.30}(0.12) \\
                           & AP(\%)   & 42.50(0.30) & 43.30(0.20) & 35.80(0.30) & 32.50(0.00) & 44.08(1.74) & \textbf{48.78}(0.32) \\
                           & AUC(\%)  & 72.80(0.20) & 73.00(0.10) & 68.60(0.50) & 62.00(0.10) & 76.72(1.20) & \textbf{81.09}(0.12) \\ \midrule
\multirow{4}{*}{ESPGame}   & 1-HL(\%) & 97.00(0.00) & 97.00(0.00) & 96.70(0.00) & 96.40(0.00) & 97.19(0.01) & \textbf{98.26}(0.01) \\
                           & 1-RL(\%) & 77.70(0.10) & 77.80(0.00) & 68.30(0.20) & 72.20(0.20) & 80.72(0.14) & \textbf{81.81}(0.16) \\
                           & AP(\%)   & 18.80(0.00) & 18.90(0.00) & 13.20(0.00) & 10.80(0.00) & 24.19(0.34) & \textbf{24.57}(0.17) \\
                           & AUC(\%)  & 78.30(0.10) & 78.40(0.00) & 73.40(0.10) & 67.40(0.30) & 81.29(0.15) & \textbf{82.36}(0.16) \\ \midrule
\multirow{4}{*}{IAPRTC12}  & 1-HL(\%) & 96.70(0.00) & 96.70(0.00) & 96.30(0.00) & 96.00(0.00) & 96.85(0.02) & \textbf{98.05}(0.01) \\
                           & 1-RL(\%) & 80.10(0.00) & 79.90(0.10) & 72.50(0.00) & 63.10(0.00) & 83.30(0.27) & \textbf{84.78}(0.11) \\
                           & AP(\%)   & 19.70(0.00) & 19.80(0.00) & 14.10(0.00) & 10.10(0.00) & 23.54(0.39) & \textbf{26.10}(0.13) \\
                           & AUC(\%)  & 80.50(0.00) & 80.40(0.10) & 74.60(0.00) & 66.50(0.10) & 83.55(0.22) & \textbf{84.96}(0.11) \\ \midrule
\multirow{4}{*}{Mirflickr} & 1-HL(\%) & 83.90(0.00) & 83.90(0.00) & 77.80(0.00) & 77.50(0.00) & 83.98(0.28) & \textbf{88.15}(0.07) \\
                           & 1-RL(\%) & 80.20(0.10) & 80.80(0.00) & 77.10(0.10) & 64.10(0.00) & 80.60(1.11) & \textbf{84.40}(0.09) \\
                           & AP(\%)   & 44.10(0.10) & 44.90(0.00) & 37.50(0.00) & 32.30(0.00) & 49.48(1.24) & \textbf{55.08}(0.18) \\
                           & AUC(\%)  & 80.60(0.10) & 80.70(0.00) & 76.10(0.00) & 71.50(0.10) & 79.44(1.46) & \textbf{83.71}(0.06) \\
\bottomrule
\end{tabular}
\end{table*}

\subsubsection{Experiments under Incomplete Views} 
To further validate the effectiveness of our NAIM$^3$L, we conduct experiments under the settings of incomplete views, full labels, and aligned views. iMVWL \cite {r28} is a method that can deal with both the incomplete views and missing labels. In this subsection, the settings are 50\% incomplete views, full labels, and aligned views. iMVWL-V and NAIM$^3$L-V are the corresponding methods dealing with incomplete views. As our method is customized for the non-aligned views, NAIM$^3$L-V is still conducted under the non-aligned views setting. From Table \ref{tab3}, we can see that NAIM$^3$L-V outperforms iMVWL-V. Besides, a weird phenomenon is found when comparing Tables \ref{tab2} and \ref{tab3}, i.e., the performance of iMVWL under the full labels setting is worse than that under the missing labels setting on the Pascal07, IAPRTC12, and Mirflickr datasets (the worse performance is denoted by down-arrows in Table \ref{tab3}). The reason for this counter-intuitive phenomenon will be explained and analyzed in subsection \ref {5.2.4}.

\begin{table}[h] 	
	\centering
	\caption{Results on all Five Datasets with Full Labels and the Ratio of Incomplete Multi-view $\alpha = 50$\%. Values in Parentheses Represent the Standard Deviation, and all the Values are Displayed as Percentages. Down-Arrows Denote that Performance of iMVWL under Full Label Setting is Worse than that under 50\% Missing Label Setting.} \label{tab3}
	\begin{tabular}{cclc}
		\toprule
		datasets                   & metrics  & iMVWL-V      & NAIM$^3$L-V          \\ \midrule
		\multirow{4}{*}{Corel5k}   & 1-HL(\%) & 97.85(0.03)  & \textbf{98.70}(0.01) \\
		                           & 1-RL(\%) & 87.00(0.27)  & \textbf{88.52}(0.25) \\
		                           & AP(\%)   & 28.90(0.89)  & \textbf{31.72}(0.32) \\
		                           & AUC(\%)  & 87.30(0.26)  & \textbf{88.81}(0.23) \\ \midrule
		\multirow{4}{*}{Pascal07}  & 1-HL(\%) & 88.19(0.28)$\downarrow$   & \textbf{92.87}(0.05) \\
		                           & 1-RL(\%) & 73.74(0.53)  & \textbf{79.07}(0.08) \\
		                           & AP(\%)   & 43.54(0.87)$\downarrow$   & \textbf{49.25}(0.24) \\
		                           & AUC(\%)  & 76.80(0.59)  & \textbf{81.81}(0.10) \\ \midrule
		\multirow{4}{*}{ESPGame}   & 1-HL(\%) & 97.19(0.01)  & \textbf{98.27}(0.01) \\
		                           & 1-RL(\%) & 80.96(0.15)  & \textbf{82.22}(0.15) \\
		                           & AP(\%)   & 24.46(0.40)  & \textbf{24.89}(0.17) \\
		                           & AUC(\%)  & 81.55(0.19)  & \textbf{82.77}(0.16) \\ \midrule
		\multirow{4}{*}{IAPRTC12}  & 1-HL(\%) & 96.84(0.02)$\downarrow$   & \textbf{98.05}(0.01) \\
		                           & 1-RL(\%) & 83.23(0.26)$\downarrow$   & \textbf{85.14}(0.09) \\
		                           & AP(\%)   & 23.40(0.46)$\downarrow$   & \textbf{26.45}(0.14) \\
		                           & AUC(\%)  & 83.50(0.22)$\downarrow$   & \textbf{85.29}(0.10) \\ \midrule
		\multirow{4}{*}{Mirflickr} & 1-HL(\%) & 83.89(0.20)$\downarrow$   & \textbf{88.17}(0.07) \\
		                           & 1-RL(\%) & 80.59(0.72)$\downarrow$   & \textbf{84.55}(0.08) \\
		                           & AP(\%)   & 48.95(0.97)$\downarrow$   & \textbf{55.30}(0.16) \\
		                           & AUC(\%)  & 79.69(1.15)  & \textbf{83.84}(0.05) \\ \bottomrule
	\end{tabular}
\end{table}

\subsubsection{Experiments under Missing Labels}
Similarly, experimental settings in this subsection are 50\% of missing labels, complete views, and aligned views. iMVWL-L and NAIM$^3$L-L are the corresponding methods dealing with missing labels. Still, NAIM$^3$L-L is conducted under the non-aligned views. Not surprisingly, NAIM$^3$L-L once again outperforms iMVWL-L. When comparing Tables \ref{tab2} and \ref{tab4}, the weird phenomenon that appeared in Table \ref{tab3} disappeared, and we will analyze these results in the next subsection.

\begin{table}[h]
	\centering
	\caption{Results on all Five Datasets with Complete views and the Ratio of Missing Multi-Label $\beta = 50$\%. Values in Parentheses Represent the Standard Deviation, and all the Values are Displayed as Percentages.}\label{tab4} 	
	\begin{tabular}{cccc}
		\toprule
		datasets                   & metrics  & iMVWL-L      & NAIM$^3$L-L          \\ \midrule
		\multirow{4}{*}{Corel5k}   & 1-HL(\%) & 97.91(0.01)  & \textbf{98.70}(0.01) \\
		                           & 1-RL(\%) & 88.00(0.31)  & \textbf{88.55}(0.28) \\
		                           & AP(\%)   & 30.77(0.58)  & \textbf{31.93}(0.39) \\
		                           & AUC(\%)  & 88.32(0.32)  & \textbf{88.84}(0.26) \\ \midrule
		\multirow{4}{*}{Pascal07}  & 1-HL(\%) & 88.79(0.12)  & \textbf{92.87}(0.06) \\
		                           & 1-RL(\%) & 76.30(0.59)  & \textbf{79.04}(0.15) \\
		                           & AP(\%)   & 46.95(0.55)  & \textbf{49.30}(0.31) \\
		                           & AUC(\%)  & 79.24(0.61)  & \textbf{81.79}(0.13) \\ \midrule
		\multirow{4}{*}{ESPGame}   & 1-HL(\%) & 97.23(0.01)  & \textbf{98.27}(0.01) \\
		                           & 1-RL(\%) & 81.54(0.22)  & \textbf{82.20}(0.14) \\
		                           & AP(\%)   & 25.80(0.34)  & \textbf{24.90}(0.17) \\
		                           & AUC(\%)  & 82.03(0.19)  & \textbf{82.74}(0.14) \\ \midrule
		\multirow{4}{*}{IAPRTC12}  & 1-HL(\%) & 96.90(0.01)  & \textbf{98.05}(0.01) \\
		                           & 1-RL(\%) & 84.14(0.17)  & \textbf{85.11}(0.10) \\
		                           & AP(\%)   & 25.00(0.18)  & \textbf{26.47}(0.13) \\
		                           & AUC(\%)  & 84.17(0.15)  & \textbf{85.27}(0.10) \\ \midrule
		\multirow{4}{*}{Mirflickr} & 1-HL(\%) & 84.25(0.58)  & \textbf{88.18}(0.07) \\
		                           & 1-RL(\%) & 81.22(1.97)  & \textbf{84.57}(0.08) \\
		                           & AP(\%)   & 50.23(2.03)  & \textbf{55.36}(0.14) \\
		                           & AUC(\%)  & 79.48(2.91)  & \textbf{83.86}(0.06) \\ \bottomrule
	\end{tabular}
\end{table}

\subsubsection{Analysis and Summary} \label{5.2.4}
In this subsection, we make an explanation of the counter-intuitive phenomenon in Table \ref{tab3} and summarize the experimental results of subsection \ref{5.2}. By comparing Tables \ref{tab2} and \ref{tab4}, when the missing label ratio is fixed and the incomplete view ratio changes, results of iMVWL are intuitively consistent. However, from Tables \ref{tab2} and \ref{tab3}, when the incomplete view ratio is fixed and the missing label ratio changes, results of iMVWL are counter-intuitive. These results indicate that iMVWL deals with incomplete view relatively appropriate whereas inappropriate when dealing with missing multi-labels. It can be attributed to that iMVWL makes an improper low-rank assumption about the entire multi-label matrix (see Eq. (\ref{eqb})), which violates the reality. Differently, results of NAIM$^3$L are all intuitively consistent since the high-rank assumption about the entire multi-label matrix in NAIM$^3$L is supported by observations of real datasets and thus more appropriate than that of iMVWL.

To summarize, the results in Tables \ref{tab2}, \ref{tab3}, and \ref{tab4} indicate that our NAIM$^3$L consistently performs the best whether dealing with one challenge, two, or three, and the high-rank assumption indeed makes sense in all these settings.

\subsection{On the High/Low Rank Validation of the Predicted Multi-Label Matrices}
To justify the rationality of the assumptions in NAIM$^3$L, (i.e., the high-rankness of the entire label matrix and the low-rankness of the sub-label matrices), we conduct experiments on the predicted multi-label matrices by using the learned $\mathbf{W}$ in Algorithm 1. First, we validate the high-rankness of the predicted entire multi-label matrix by showing that the label matrices are of full column rank in Table \ref{tab5}. Second, we plot the nuclear norm value of the predicted entire multi-label matrix in Fig. \ref {Fig3}. As the number of multiple labels is too big to show the low-rankness of each sub-label matrix in a single figure, we show the mean value and the median value of the nuclear norm of the predicted sub-label matrices to validate the low-rank assumption. From Fig. \ref {Fig3}, we can find that the mean value and the median value are relatively small, which means the predicted sub-label matrices are of low-rank for the reason that the nuclear norm of a matrix is an upper bound of its rank. In a word, the ranks of the predicted multi-label matrices are consistent with the assumptions of our model.

\begin{table}[h]	 
	\centering
	\caption{Rank of the predicted entire multi-label matrix of each dataset.}\label{tab5} 
	
	\begin{tabular}{@{}cccccc@{}} 
		\toprule
		dataset   & Corel5k     & Espgame   & IAPRTC12   & Pascal07   & Mirflickr   \\ \midrule	
		Rank      & 260         & 268       & 291        & 20         & 38           \\ \bottomrule
	\end{tabular}
\end{table}

\begin{figure}[h] 
	\centering
	\includegraphics[width=0.48 \textwidth, height = 38.1mm]{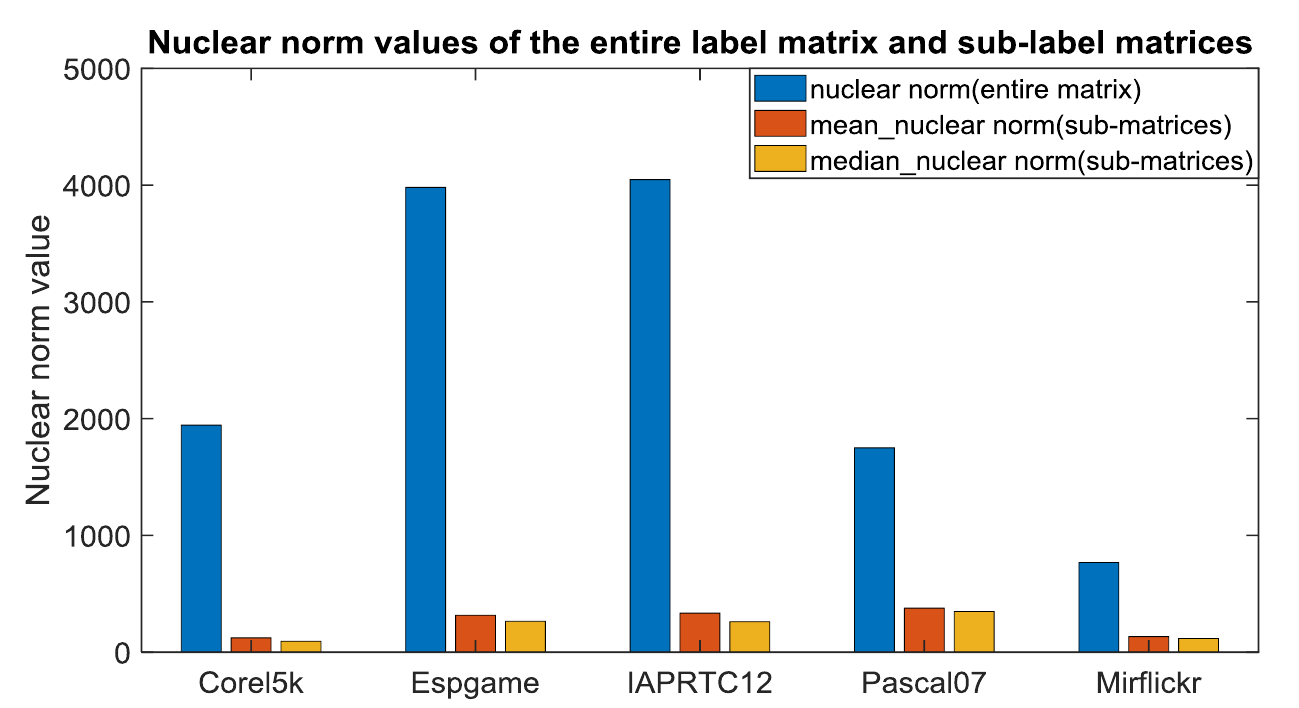} 
	\caption{The nuclear norm of the predicted entire label matrix, the mean value and the median value of the nuclear norm of the predicted sub-label matrices.}\label{Fig3}
	
\end{figure}

\subsection{Ablation Study} \label{V-D}
In this subsection, we study NAIM$^3$L-I (with only the loss function $\mathcal{L}$) 
\begin{equation} \nonumber
\begin{aligned}
{\text {NAIM$^3$L-I :  }} \min _{\mathbf{w}^{(i)}} \frac{1}{2} \sum_{i=1}^{V}\left\|\mathbf{P}^{(i)} \odot\left(\mathbf{X}^{(i)} \mathbf{W}^{(i)}-\mathbf{Y}^{(i)}\right)\right\|_{F}^{2},
\end{aligned}
\end{equation}
and NAIM$^3$L-II (with $\mathcal{L}$  and the first low-rank term in the regularizer $\mathcal{R}$ )
\begin{equation} \nonumber
\begin{aligned}
{\text {NAIM$^3$L-II :  }}  \min _{\mathbf{W}^{(i)}} \frac{1}{2} \sum_{i=1}^{V}\left\|\mathbf{P}^{(i)} \odot\left(\mathbf{X}^{(i)} \mathbf{W}^{(i)}-\mathbf{Y}^{(i)}\right)\right\|_{F}^{2} \\
 +\lambda\left(\sum_{k=1}^{c}\left\|[\mathbf{X}_{k}^{(1)} \mathbf{W}^{(1)} ; \mathbf{X}_{k}^{(2)} \mathbf{W}^{(2)} ; \cdots ; \mathbf{X}_{k}^{(V)} \mathbf{W}^{(V)}]\right\|_{*}\right)
\end{aligned}
\end{equation} 
to validate the effectiveness of the proposed regularizer $\mathcal{R}$, especially the significance of the high-rank term. From Table \ref{tab6}, we can see that NAIM$^3$L-I performs the worst on five datasets while NAIM$^3$L has the best performance. Compared with NAIM$^3$L-I, the performance of NAIM$^3$L-II improves very little, but after adding the high-rank term, all the four metrics raise considerably. This demonstrates that the local and the global structures, especially the latter, are beneficial to characterize the relationships among multiple labels and the presented regularizer  is effective in learning these relations. Additionally, an interesting result is found when comparing Tables \ref{tab2} and \ref{tab6}, that is, NAIM$^3$L-I has better performance than most of other compared methods in Table \ref{tab2}. The reasons may be that other methods make improper assumptions which violate reality and the indicator matrix $\mathbf {P}$ introduced by us can alleviate both the negative effects on missing labels and incomplete views. 

\begin{table}[h]
\centering
\caption{Results of The Variants of NAIM$^3$L with the Ratio of Incomplete Multi-view  $\alpha = 50$\% and the Ratio of Missing Multi-label $\beta = 50$\%. Values in Parentheses Represent the Standard Deviation, and all the Values are Displayed as Percentages. All the Three Methods are Implemented under the Non-aligned Views Setting.}
\label{tab6}
\begin{tabular}{ccccc}
\toprule
datasets                   & metrics  & NAIM$^3$L-I    & NAIM$^3$L-II   & NAIM$^3$L      \\ \midrule
\multirow{4}{*}{Corel5k}   & 1-HL(\%) & \textbf{98.70}(0.00) & \textbf{98.70}(0.00) & \textbf{98.70}(0.01) \\
                           & 1-RL(\%) & 82.73(0.20) & 83.54(0.21) & \textbf{87.84}(0.21) \\
                           & AP(\%)   & 30.20(0.40) & 30.47(0.36) & \textbf{30.88}(0.35) \\
                           & AUC(\%)  & 82.99(0.20) & 83.80(0.21) & \textbf{88.13}(0.20) \\ \midrule
\multirow{4}{*}{Pascal07}  & 1-HL(\%) & 92.83(0.00) & 92.83(0.00) & \textbf{92.84}(0.05) \\
                           & 1-RL(\%) & 77.29(0.18) & 77.35(0.17) & \textbf{78.30}(0.12) \\
                           & AP(\%)   & 48.64(0.35) & 48.66(0.35) & \textbf{48.78}(0.32) \\
                           & AUC(\%)  & 79.99(0.17) & 80.55(0.17) & \textbf{81.09}(0.12) \\ \midrule
\multirow{4}{*}{ESPGame}   & 1-HL(\%) & \textbf{98.26}(0.00) & \textbf{98.26}(0.00) & \textbf{98.26}(0.01) \\
                           & 1-RL(\%) & 79.63(0.20) & 79.80(0.11) & \textbf{81.81}(0.16) \\
                           & AP(\%)   & 24.28(0.20) & 24.34(0.16) & \textbf{24.57}(0.17) \\
                           & AUC(\%)  & 80.04(0.20) & 80.24(0.13) & \textbf{82.36}(0.16) \\ \midrule
\multirow{4}{*}{IAPRTC12}  & 1-HL(\%) & \textbf{98.05}(0.00) & \textbf{98.05}(0.00) & \textbf{98.05}(0.01) \\
                           & 1-RL(\%) & 82.52(0.00) & 82.70(0.00) & \textbf{84.78}(0.11) \\
                           & AP(\%)   & 25.71(0.10) & 25.76(0.10) & \textbf{26.10}(0.13) \\
                           & AUC(\%)  & 82.56(0.10) & 82.76(0.10) & \textbf{84.96}(0.11) \\ \midrule
\multirow{4}{*}{Mirflickr} & 1-HL(\%) & \textbf{88.15}(0.00) & \textbf{88.15}(0.00) & \textbf{88.15}(0.07) \\
                           & 1-RL(\%) & 84.05(0.00) & 84.10(0.00) & \textbf{84.40}(0.09) \\
                           & AP(\%)   & 54.95(0.20) & 54.98(0.16) & \textbf{55.08}(0.18) \\
                           & AUC(\%)  & 83.33(0.00) & 83.39(0.00) & \textbf{83.71}(0.06) \\ \bottomrule
\end{tabular}
\end{table}

\subsection{Statistical Analysis} \label{V-E}
To further analyze the effectiveness of our proposed method, we conduct the significance test on the results reported in Tables 2 and 6. Specifically, for Table 2, we employ the Nemenyi test \cite {nemenyi1963,demvsar2006,zhang2019} on the 20 results (4 metrics on 5 datasets) of six methods and set the significance level $\alpha_{l}$ at 0.05. Then the critical value $q_{\alpha} = 2.850$ and the critical distance $CD = q_{\alpha} \sqrt{k(k+1) / N} (k = 6, N = 20)$. Similarly, for Table 6, $ q_{\alpha} = 2.344$ and $ CD = q_{\alpha} \sqrt{k(k+1) / N} (k = 3, N = 20)$. Results of the Nemenyi test are shown in  Fig. \ref{Fig4a} and \ref{Fig4b}, respectively. As we can see from Fig. \ref{Fig4a}, NAIM$^3$L performs better than the other methods except for the iMVWL method at the significance level 0.05. Besides, from Fig. \ref{Fig4b}, we can conclude that NAIM$^3$L performs better than other two methods at the significance level 0.05, which indicates the rationality and  importance of our regularization term, especially the high-rank term.

\begin{figure}[h]
	\centering
	\subfigure[Nemenyi test of Table 2]{
		\includegraphics[width=0.231\textwidth, height = 25mm]{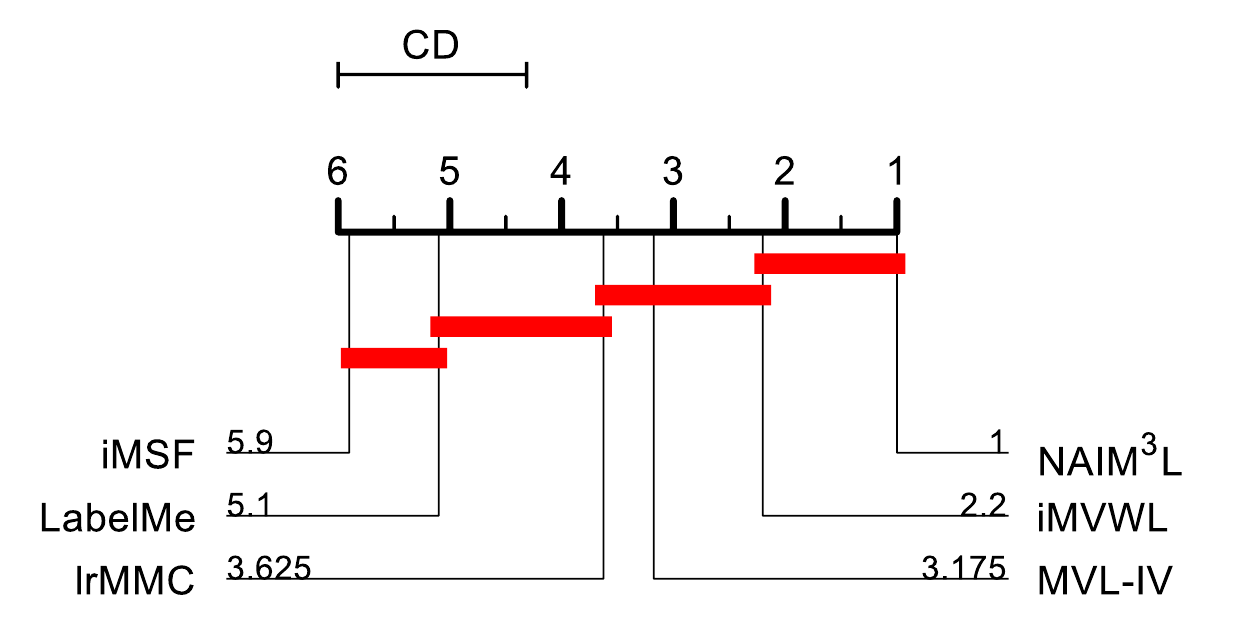}
		\label{Fig4a}
	}
	\subfigure[Nemenyi test of Table 6]{
		\includegraphics[width=0.231\textwidth, height = 25mm]{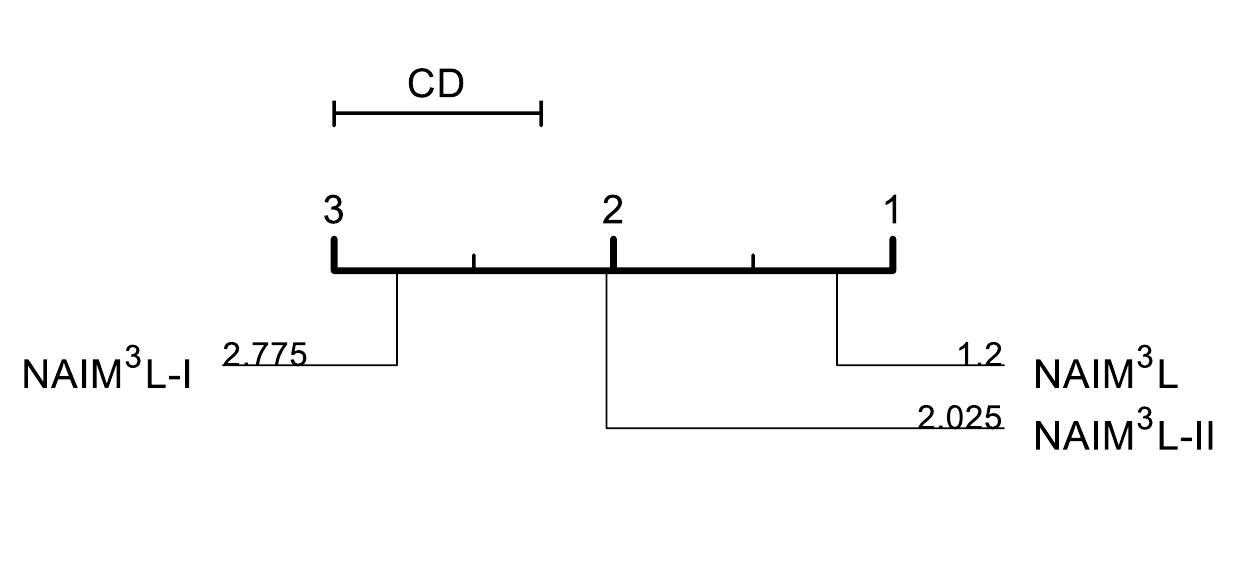}
		\label{Fig4b}
	}
	
	\caption{Statistical analysis of NAIM$^3$L by the Nemenyi test, the significance level is 0.05 and methods not connected are significantly different.}
\end{figure}

\subsection{Different Ratios of Incomplete Multi-view and Missing Multi-label Study}
In this subsection, we conduct experiments on Core15k and Pascal07 to evaluate NAIM$^3$L under different ratios of incomplete multiple views and missing multiple labels. Specifically, to evaluate the impact of different ratios of incomplete multiple views, we fix the ratio of missing multiple labels $\beta = 50$\%, and the ratio of incomplete multiple views $\alpha$ varies within the set of $\{10\%, 30\%, 50\%, 70\%, 90\%\}$. Similarly, to evaluate the impact of different ratios of missing multiple labels, we fix $\alpha=50$\%, and $\beta$ changes within the set of $\{10\%, 30\%, 50\%, 70\%, 90\%\}$. Results of all four metrics on Core15k and Pascal07 are reported in Fig. \ref{Fig5a}-\ref{Fig5b} and \ref{Fig5c}-\ref{Fig5d}, respectively. The ‘Full’ means that the multiple views and multiple labels in the training set are complete. Note that, when the ratio of missing multiple labels $\beta = 50$\% and the ratio of incomplete multiple views $\alpha=90$\%, the multi-view learning principle that each sample in the training set must appear at least one view is violated. Thus, experiment of such incomplete ratio is omitted. We can see from Fig. \ref{Fig5a} and \ref{Fig5c} that only when the ratio of missing multiple labels $\alpha=90$\%, the performance of all four metrics degenerates considerably except for the HL metric. From these observations, we can draw two conclusions, one is that our NAIM$^3$L is relatively robust, the other is that the HL may not be an appropriate metric for multi-label learning. All in all, we can conclude that the lower the incomplete (missing) ratio, the better the performance, which is reasonable and intuitively consistent.

\begin{figure}[h]
\centering
\subfigure[Corel5k]{
\includegraphics[width=0.227\textwidth]{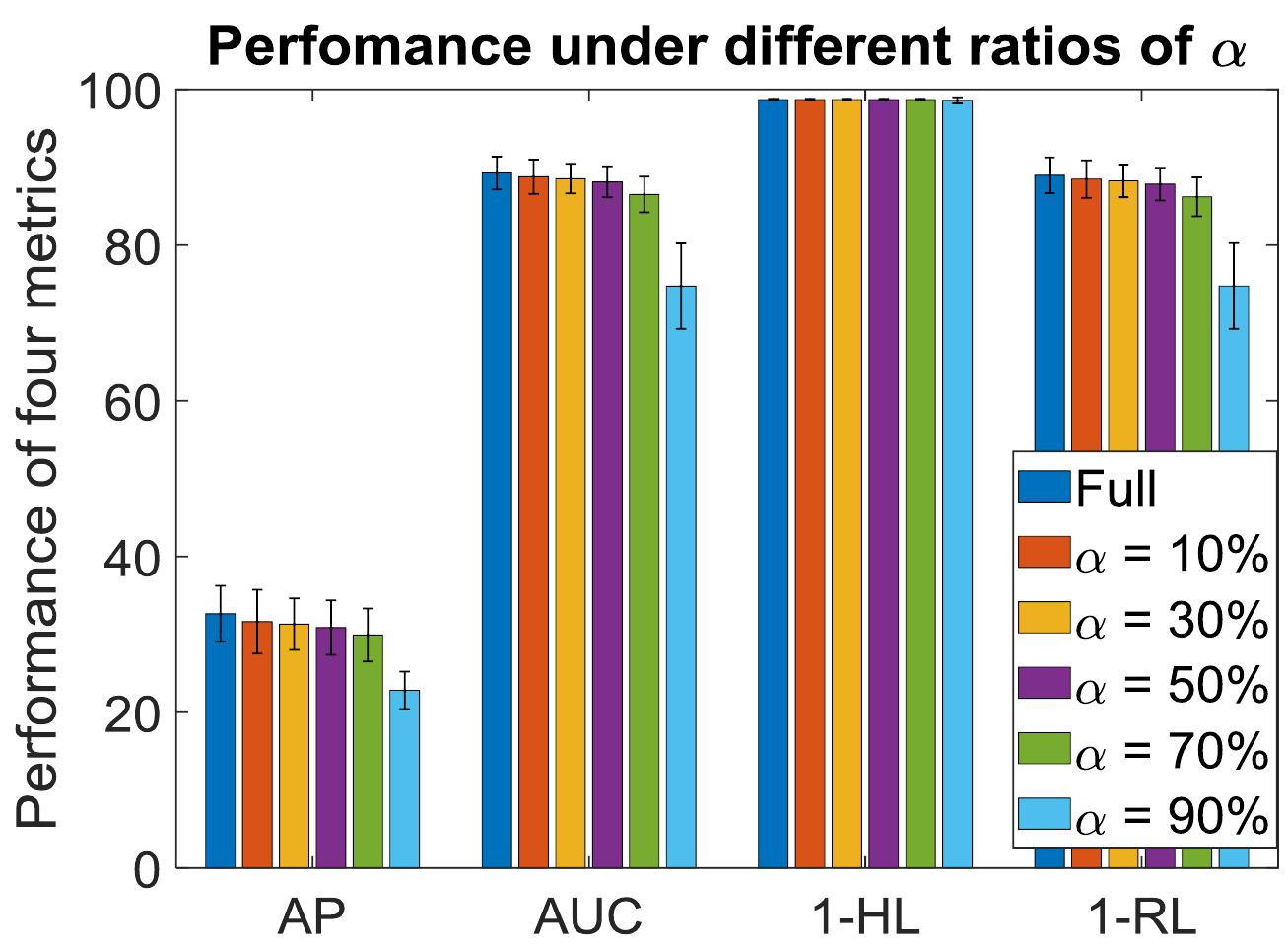}
\label{Fig5a}
}
\subfigure[Corel5k]{
\includegraphics[width=0.227\textwidth]{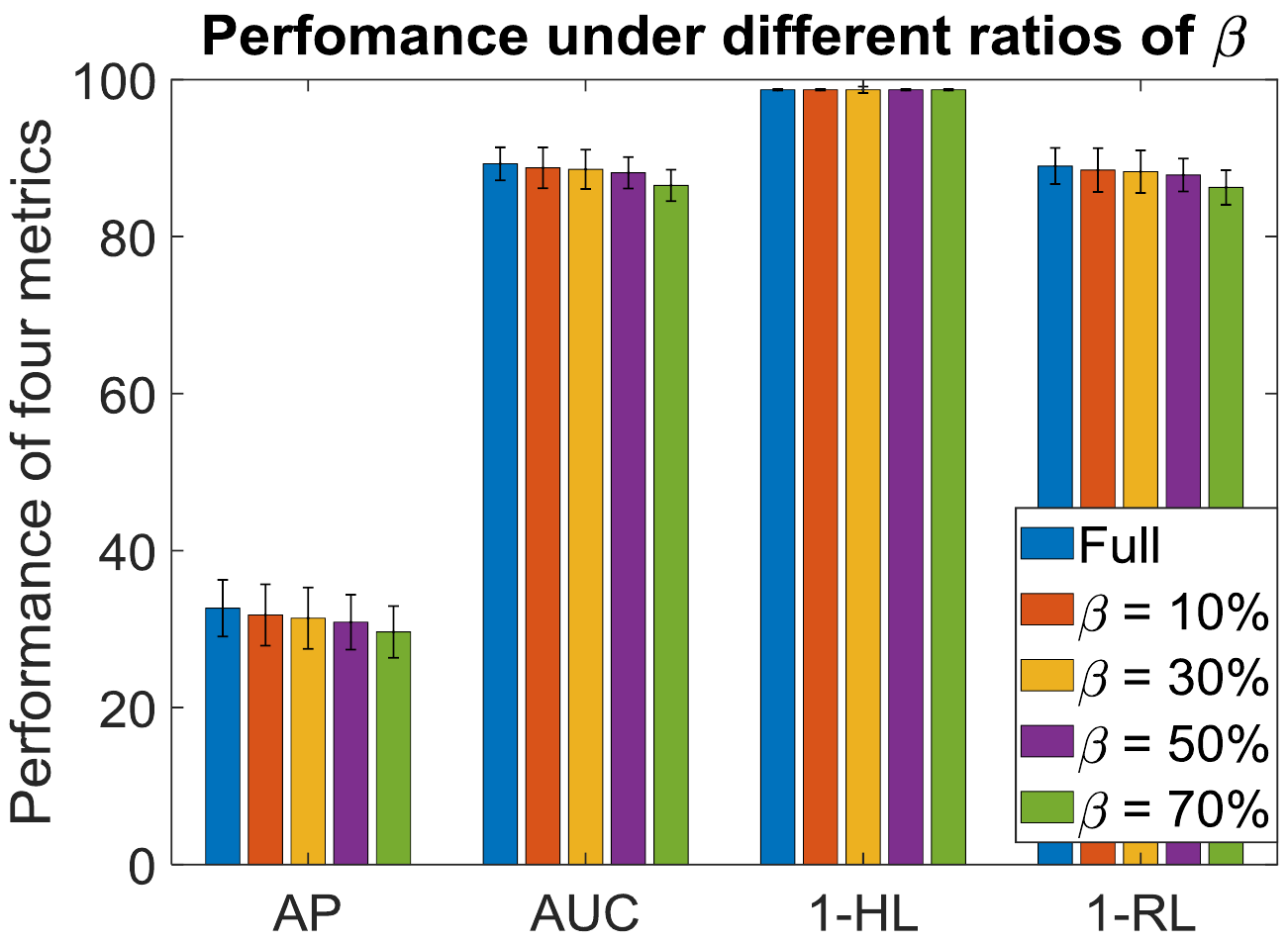}
\label{Fig5b}
}

\subfigure[Pascal07]{
\includegraphics[width=0.227\textwidth]{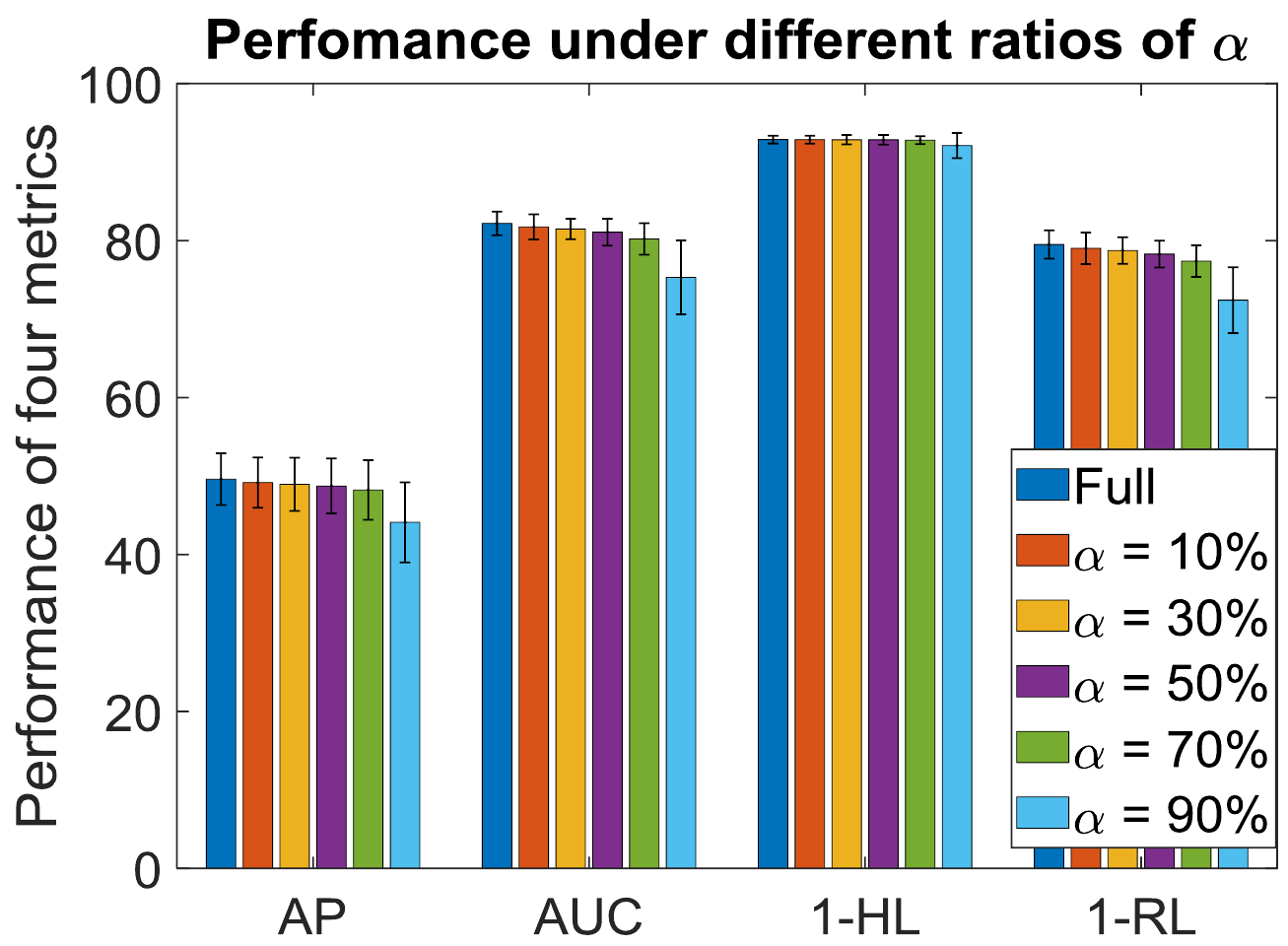}
\label{Fig5c}
}
\subfigure[Pascal07]{
\includegraphics[width=0.227\textwidth]{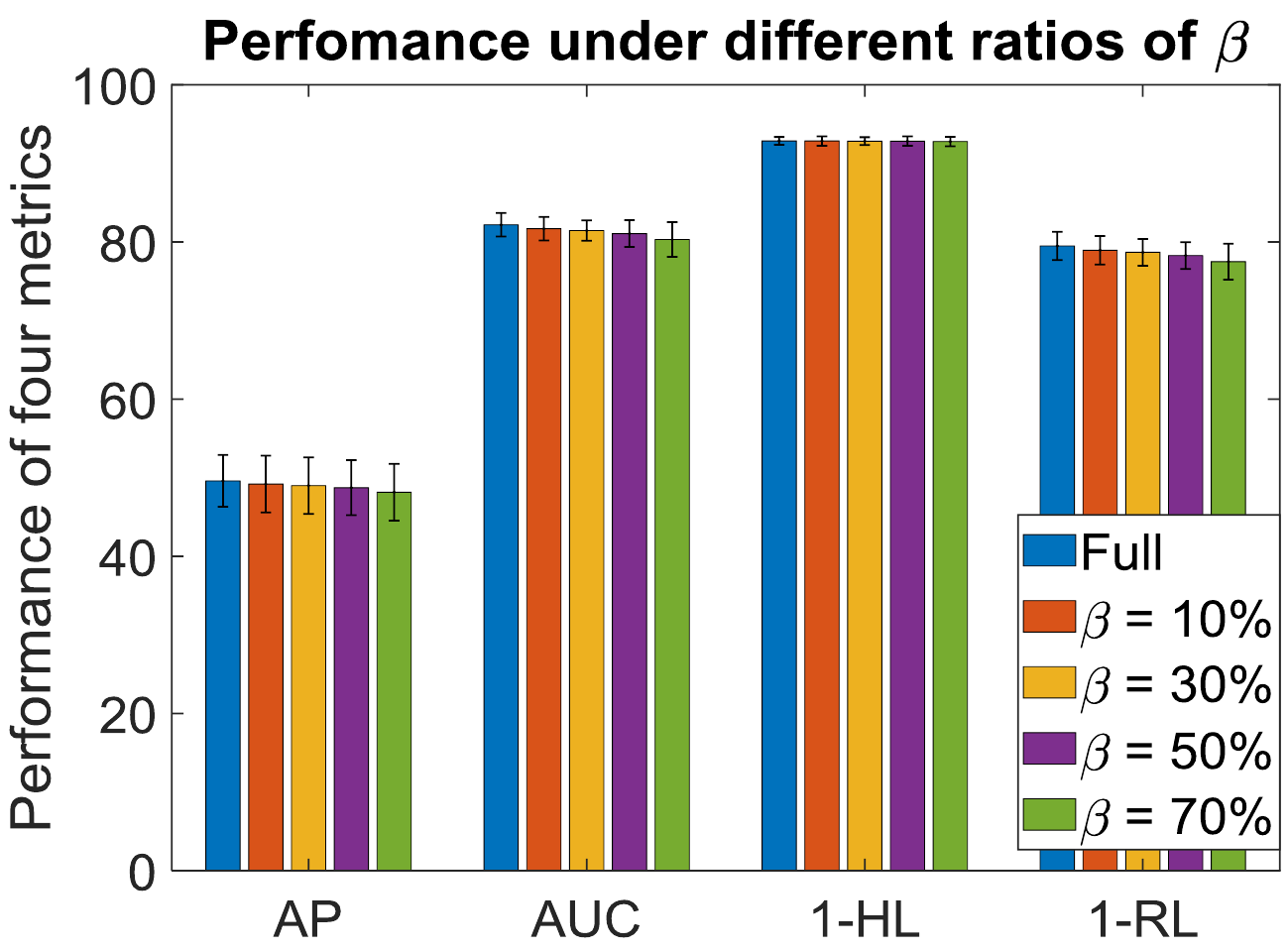}
\label{Fig5d}
}
\caption{The performance of four metrics on the Corel5k dataset and the Pascal07 dataset under different ratios of incomplete multiple views ($\alpha$) and missing multiple labels ($\beta$). The ‘Full’ means that the multiple views and multiple labels in the training set are complete. The average value and standard deviation are shown in each sub-figures.}
\end{figure}

\subsection{Hyper-parameter Study}
There is only one hyper-parameter in NAIM$^3$L, thus it is easy to choose the relatively optimal hyper-parameter. The grid search technique is adopted to choose the optimal hyper-parameter within the set of $\{10^{-3}, 10^{-2}, 0.1, 1, 10, 100\}$. For the sake of clarity, we scale them by $log10$ when showing in figures. Besides, we narrow the scope of Y-axis to make the trends of the performance look clearer. From Fig. \ref{Fig6a}-\ref{Fig6d}, we can see that NAIM$^3$L achieves relatively good performance when $\lambda$ in [0.1, 1], and the values of the four metrics vary slightly, which indicates that our method is NOT sensitive to the hyper-parameter $\lambda$. 

\begin{figure}[H]
	\centering
	\subfigure[]{
		\includegraphics[width=0.227\textwidth]{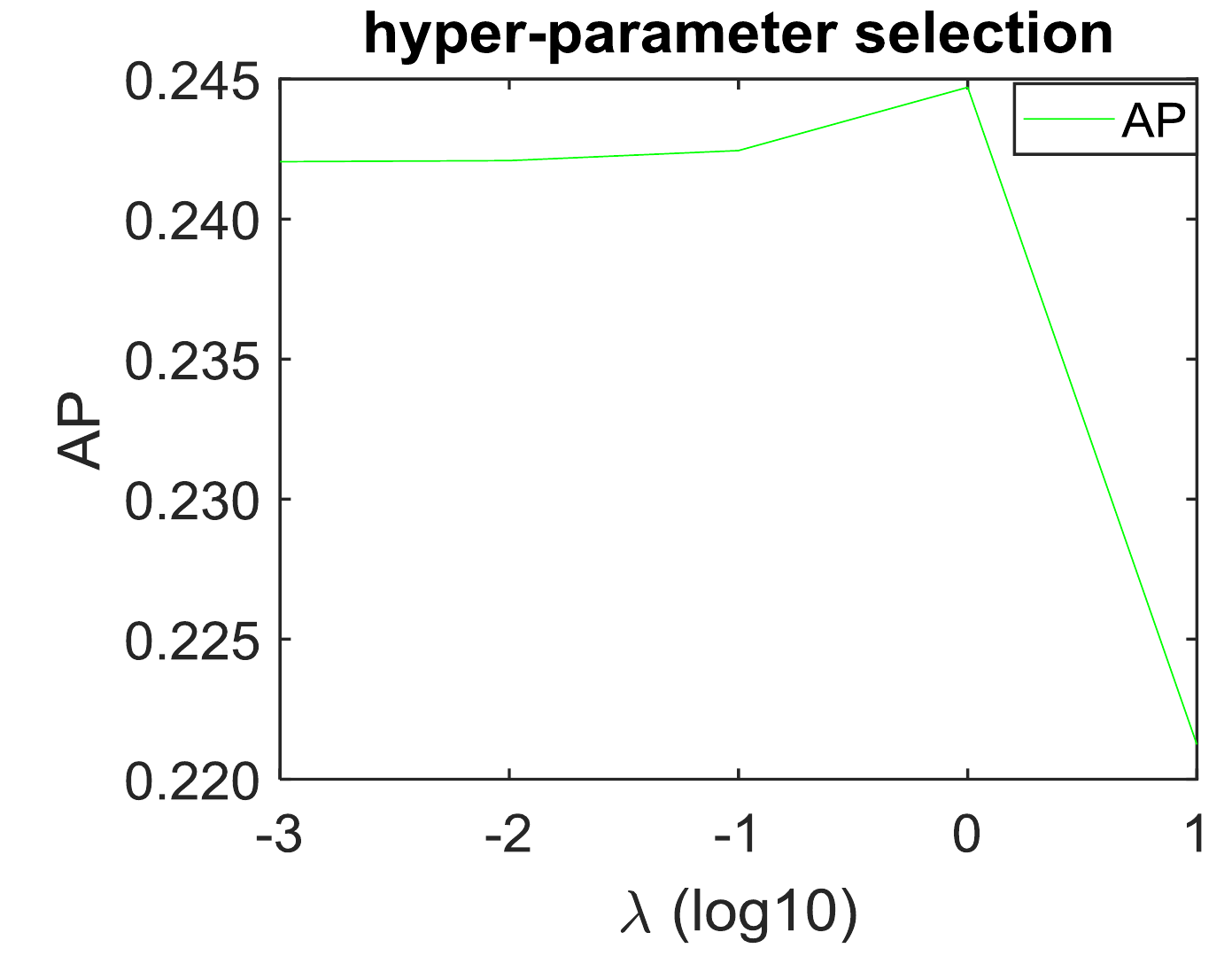}
		\label{Fig6a}
	}
	\subfigure[]{
		\includegraphics[width=0.227\textwidth]{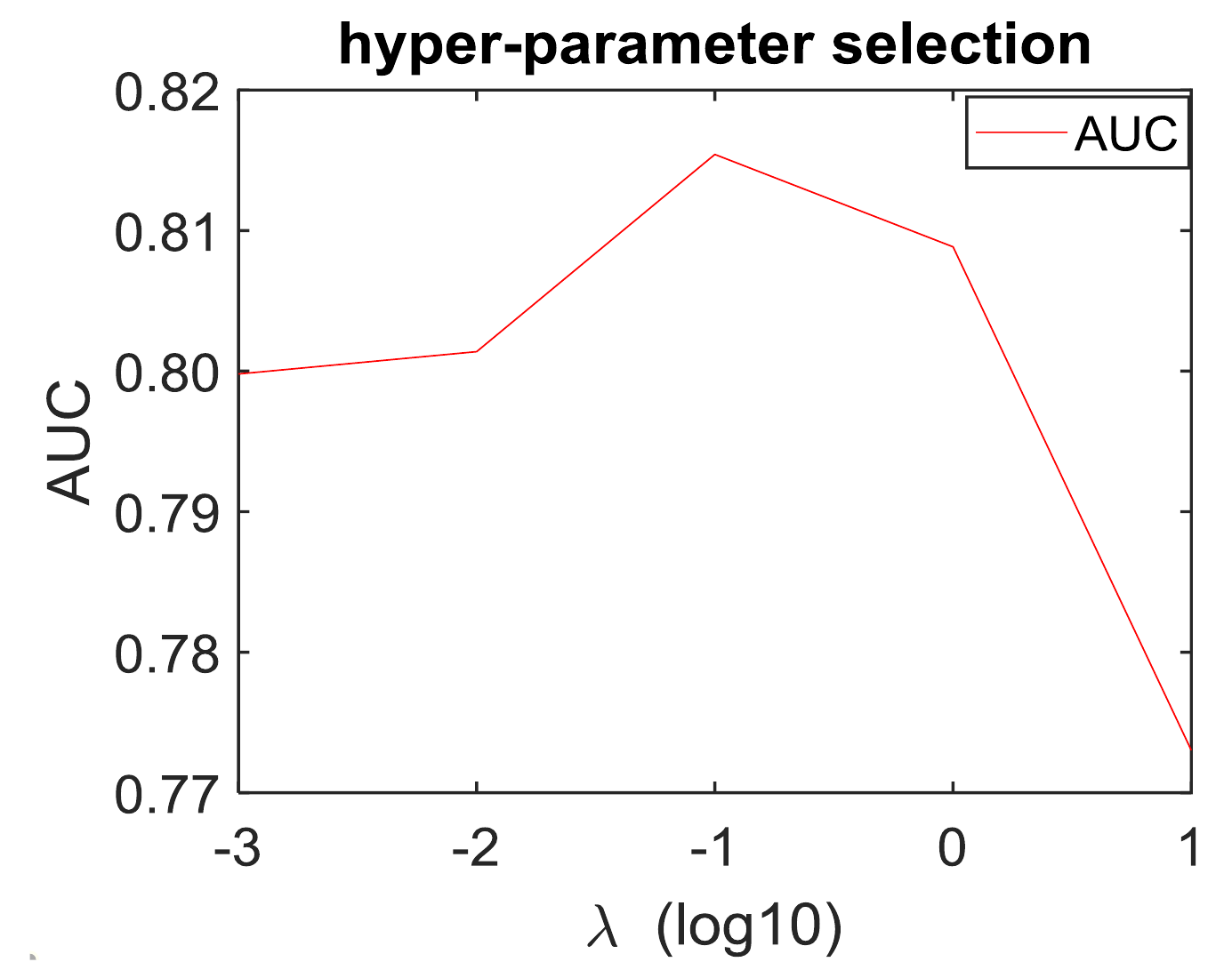}
		\label{Fig6b}
	}	
	\subfigure[]{
		\includegraphics[width=0.227\textwidth]{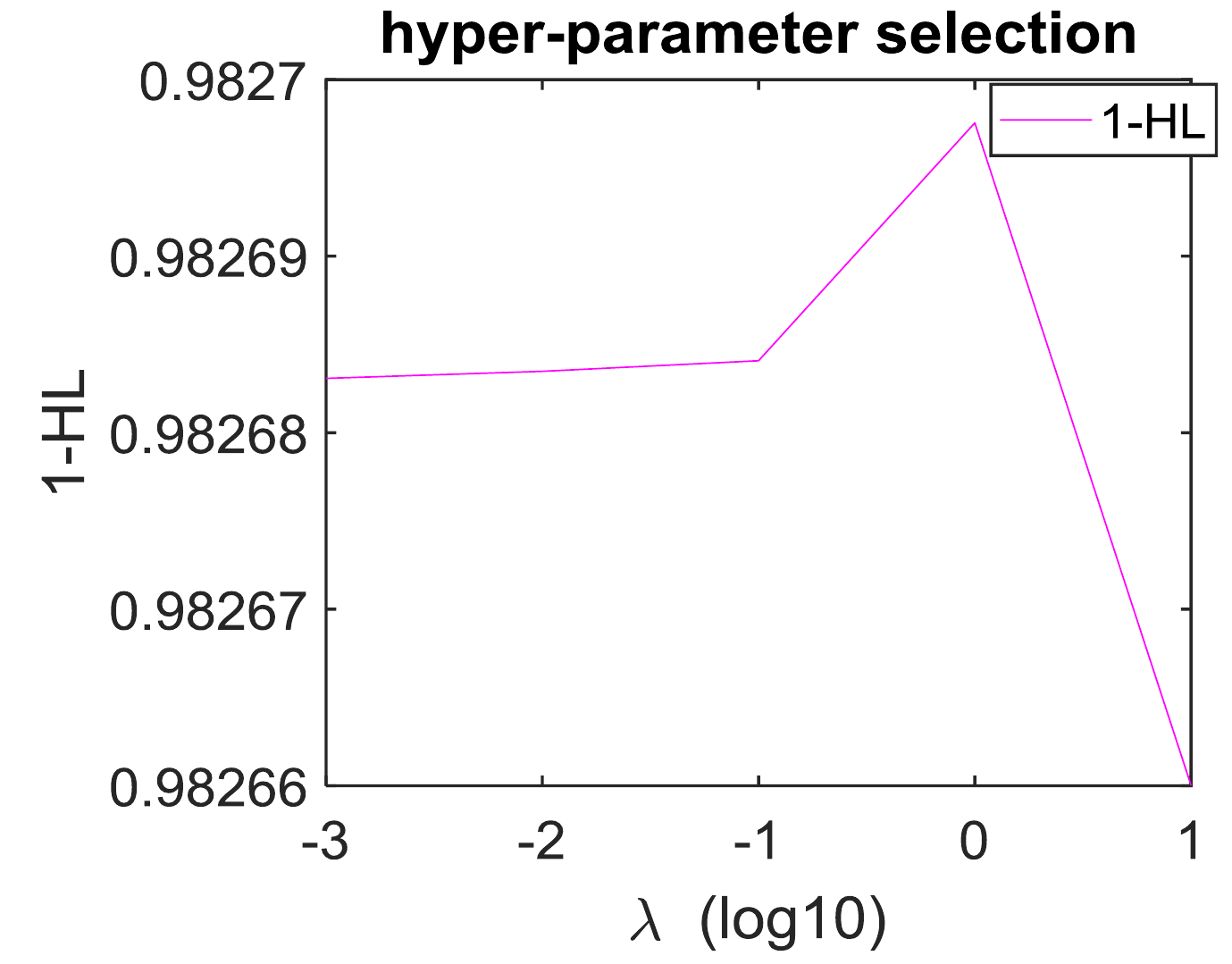}
		\label{Fig6c}
	}
	\subfigure[]{
		\includegraphics[width=0.227\textwidth]{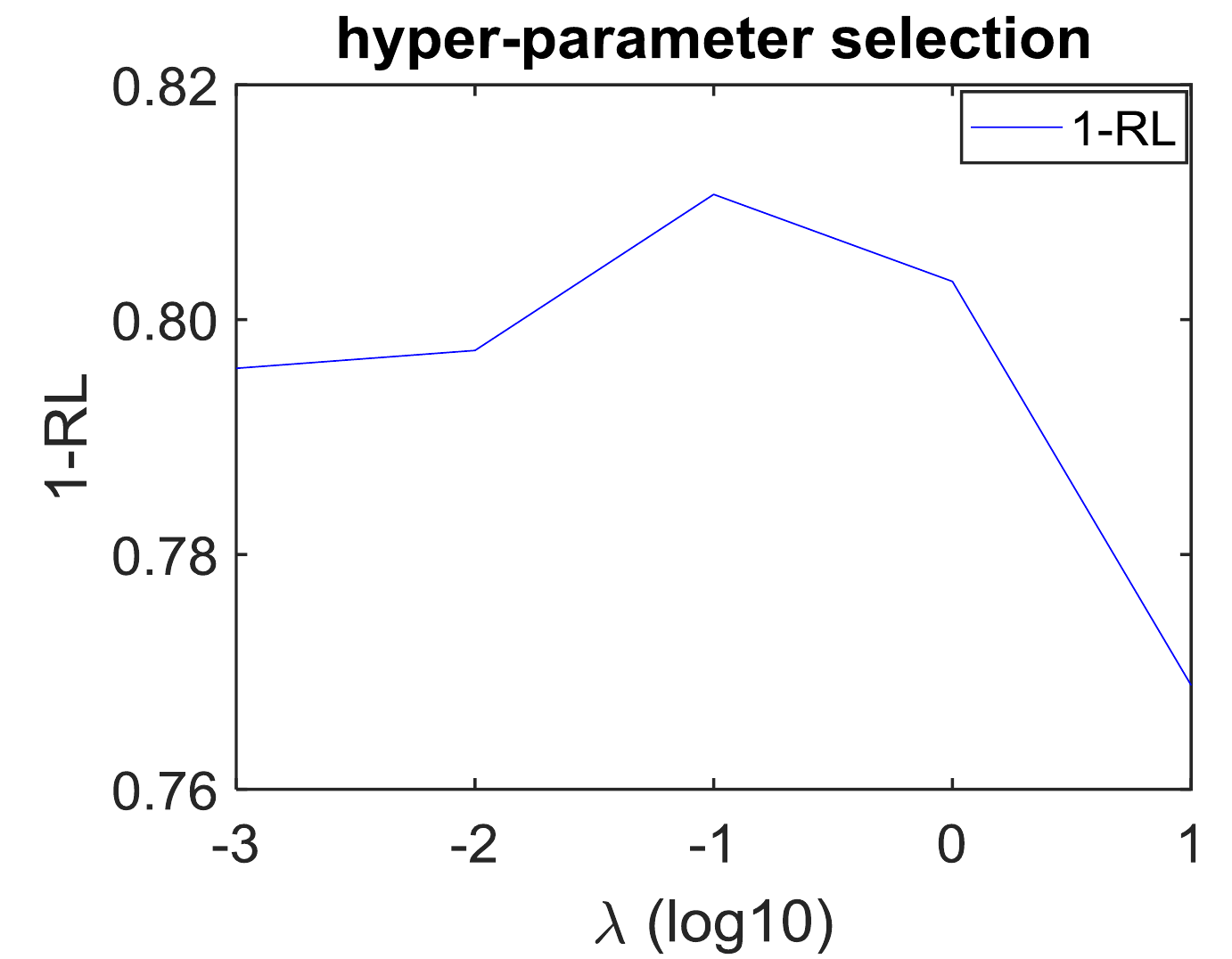}
		\label{Fig6d}
	}
	\caption{Results of the hyper-parameter selection on four metrics with different $\lambda$. The values of $\lambda$ are scaled by $log10$ for clarity.}
\end{figure}

\subsection{Impacts Study of $\mu$} 
In this subsection, we conduct experiments to validate that the parameter $\mu$ introduced in the ADMM algorithm does not change the performance of our model. From Fig. \ref{Fig7a} we find that all the four metrics remain the same when $\mu$ changes, and from  Fig. \ref{Fig7b} we can see that $\mu$ only impacts the convergence rate. The larger the $\mu$ is, the slower the objective function converges. Thus, $\mu$ is a parameter that does not need to be adjusted. Except for the experiments in this subsection, $\mu$ is fixed at 5 throughout all other experiments in this article.

\begin{figure}[h]
\centering
\subfigure[]{
\includegraphics[width=0.227\textwidth]{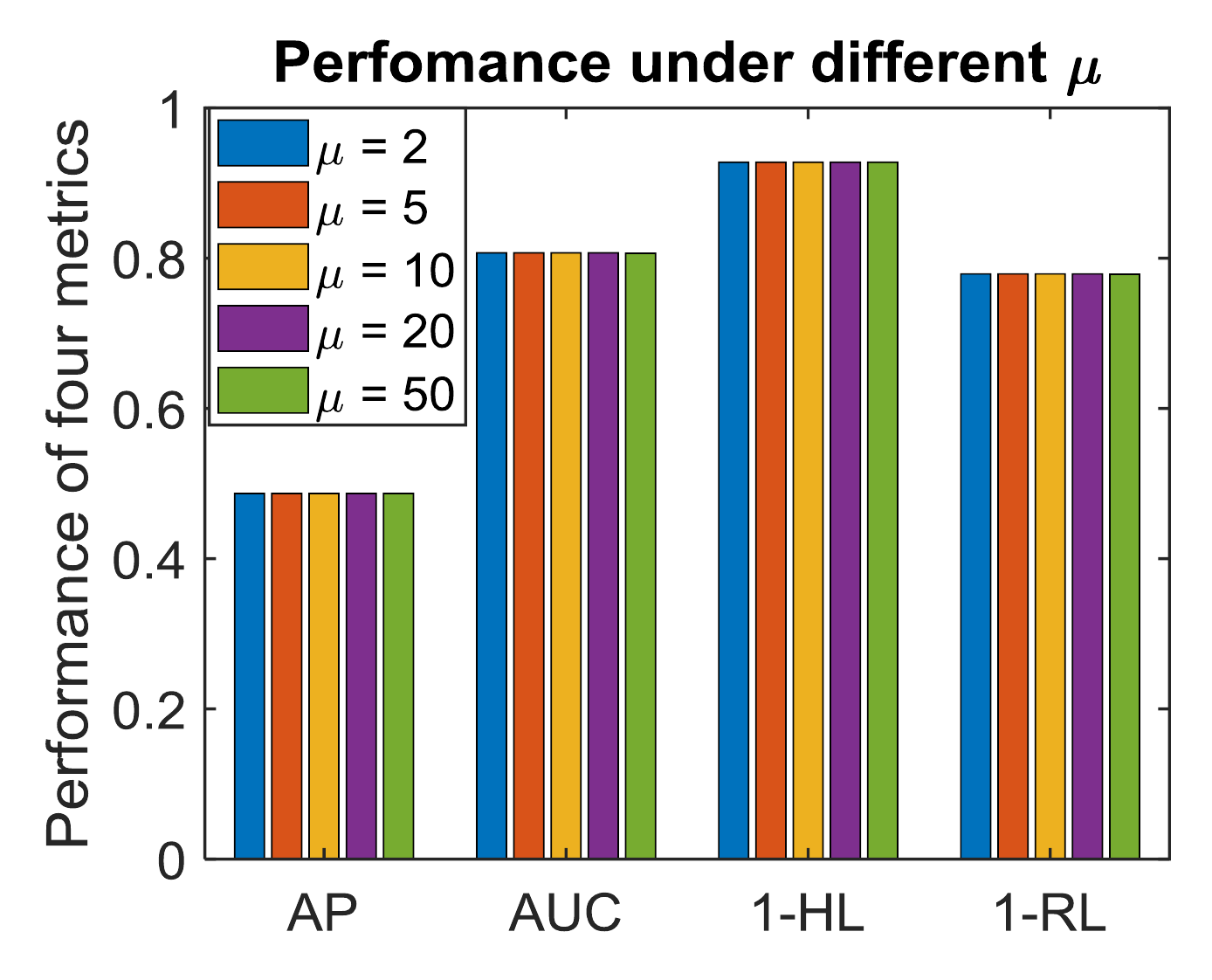}
\label{Fig7a}
}
\subfigure[]{
\includegraphics[width=0.227\textwidth]{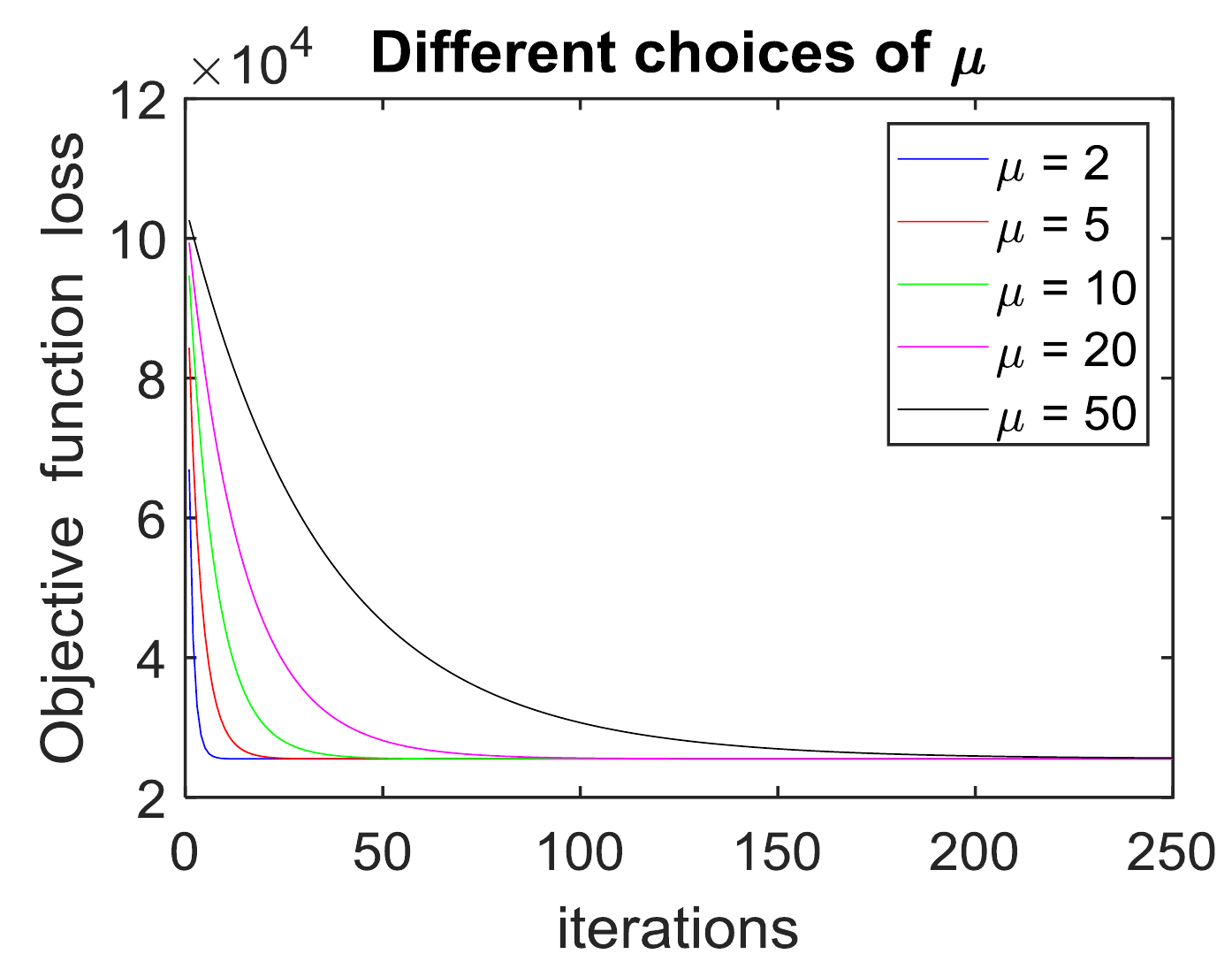}
\label{Fig7b}
}
\caption{The performance of all four metrics and the convergence rates under different $\mu$.}
\end{figure}

\subsection{Convergence Study}
In this subsection, we show the convergence curves of the objective function \eqref{Eq5} and the surrogate objective function \eqref{Eq11} to validate the convergence analysis given in subsection \ref{IV-C}. In Fig. \ref{Fig8a} and \ref{Fig8b}, we can see that both of the objective functions converge after about 20 iterations and the differences between them are small, which means the surrogate function is an appropriate alternate.

\begin{figure}[h]
\centering
\subfigure[]{
\includegraphics[width=0.227\textwidth]{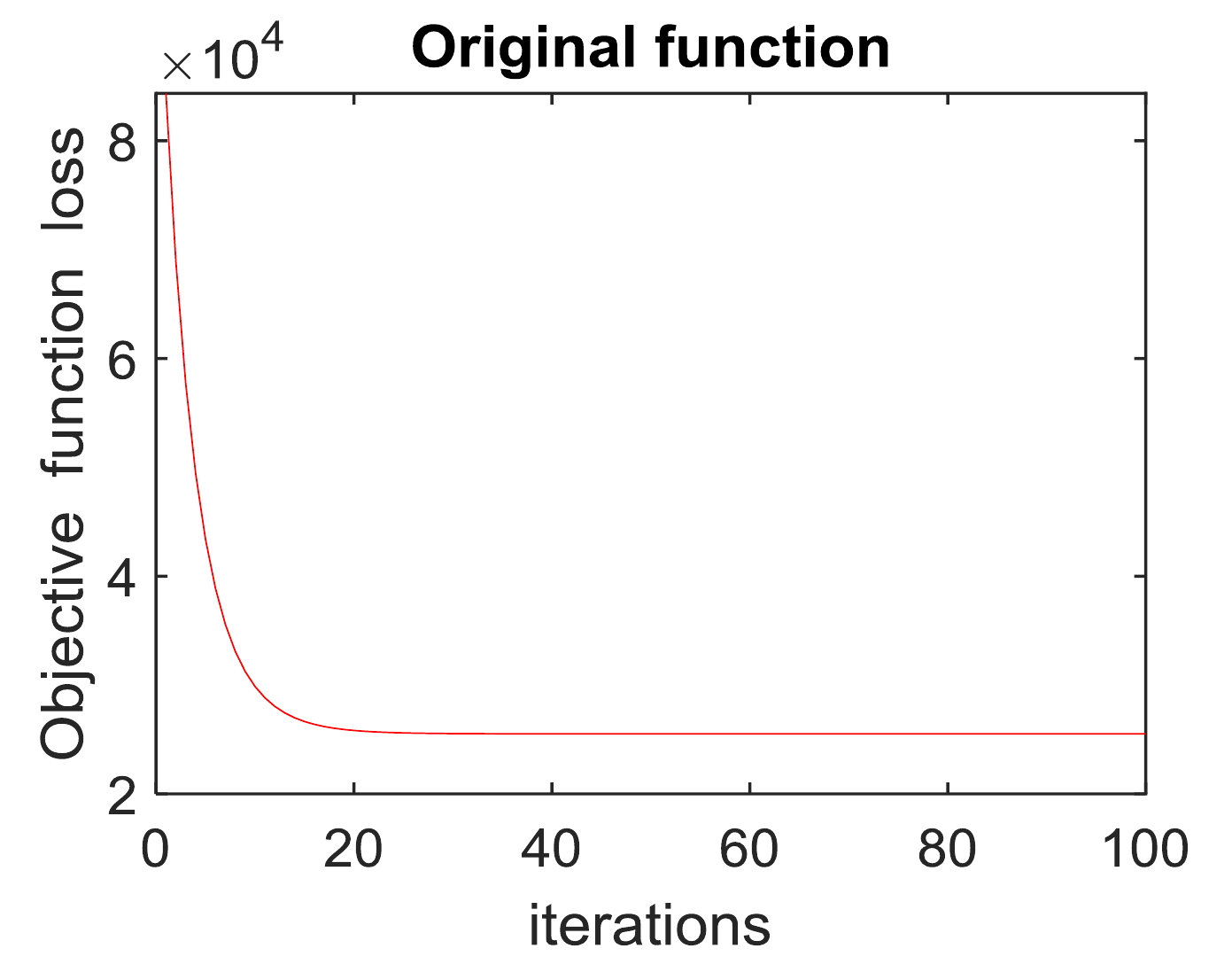}
\label{Fig8a}
}
\subfigure[]{
\includegraphics[width=0.227\textwidth]{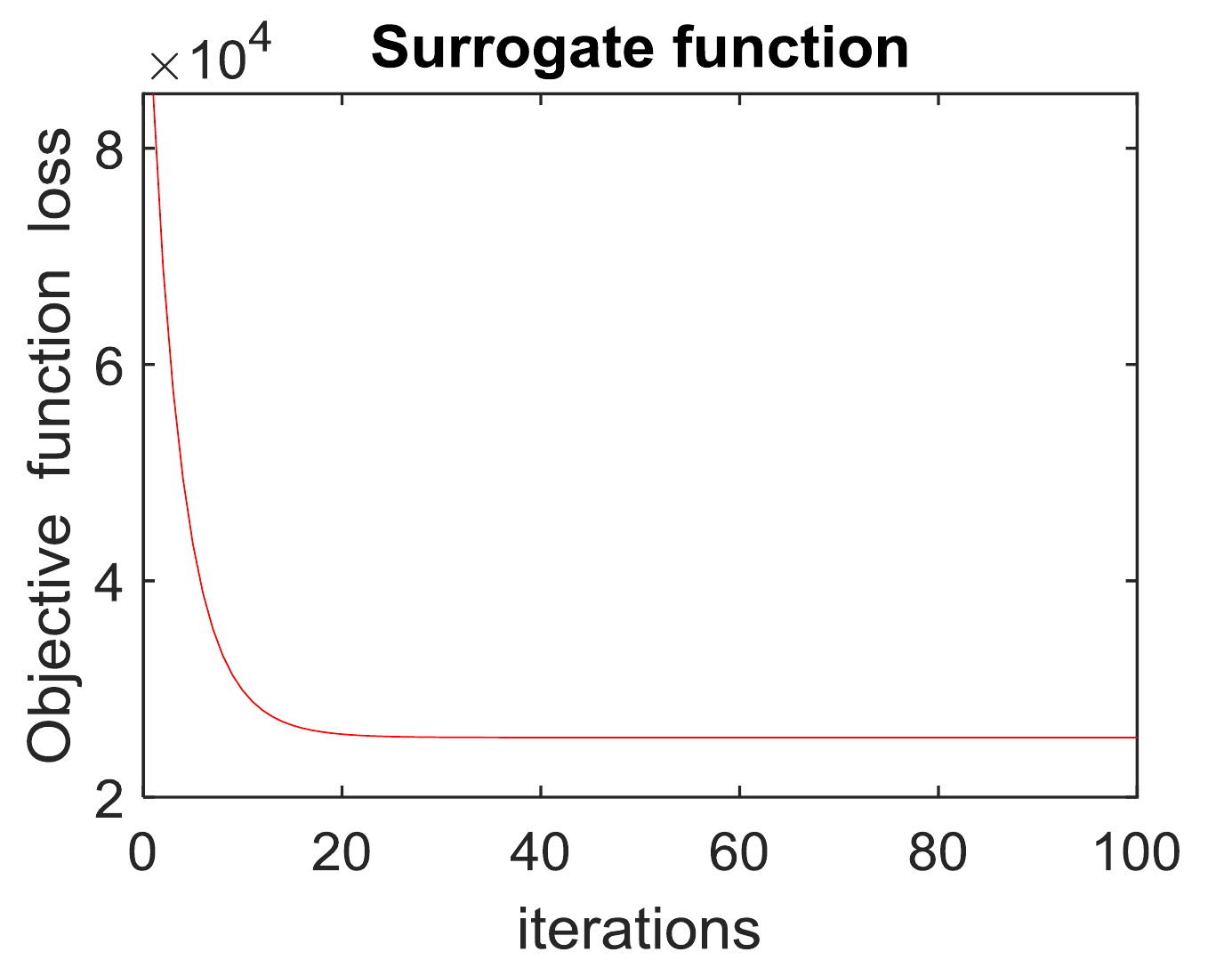}
\label{Fig8b}
}
\caption{Convergence curves of the original objective function and the surrogate objective function.}
\end{figure}

\subsection{Efficiency Study of Algorithm 2} \label{5.9}

In this subsection, we conduct several experiments for computing the sub-gradient of the trace norm with matrices of different sizes to validate the high efficiency of Algorithm 2. Random matrices are used and the matrix size varies from $10000 \times 100$ to $100000 \times 400$. Algorithm 2 is compared with three widely used SVD methods, that is, the full SVD, the economical SVD, and the truncated SVD. The truncated SVD is implemented with parameter r (rank of the matrix), also known as reduced SVD in this case. All the experiments are repeated ten times and the average results of the run time (in seconds) are reported in Table \ref{tab7}. The SVD (full) fails when the matrix size increase to $40000 \times 100 $ because of that the computer is out of memory (16G RAM). From Table \ref{tab7}, we can find that Algorithm 2 is of orders of magnitude faster than the SVD( full). Besides, both the run time of SVD (econ) and SVD (trunc) is about twice of the Algorithm 2. All of these illustrate the efficiency of the Algorithm 2.

\begin{table*}[htbp]
\centering
\caption{The Average Run Time of Different Methods for Computing the sub-Gradient of the Trace Norm (in Seconds). '-' Denotes that the Computer is out of Memory.}
\label{tab7}
\begin{tabular}{@{}cccccccccc@{}}
\toprule
matrix   size & 10000$\times$100 & 10000$\times$200 & 20000$\times$100 & 20000$\times$200 & 40000$\times$100 & 40000$\times$200 & 40000$\times$400 & 100000$\times$200 & 100000$\times$400 \\ \midrule
SVD (full)     & 20.0031         & 1.3839          & 80.2122         & 4.9386          & -               & -               & -               & -               & -               \\
SVD (econ)     & 0.0571          & 0.1377          & 0.1459          & 0.2264          & 0.1696          & 0.4400          & 1.3460           & 1.3813          & 3.9416          \\
SVD (trunc)   & 0.0475          & 0.1471          & 0.0955          & 0.2133          & 0.1838          & 0.4791          & 1.4396          & 1.4526          & 4.2048          \\
Algorithm 2   & \textbf{0.0334} & \textbf{0.0731} & \textbf{0.0590} & \textbf{0.1603} & \textbf{0.0864} & \textbf{0.2109} & \textbf{0.6736} & \textbf{0.5019} & \textbf{1.6042} \\
\bottomrule  
\end{tabular}
\end{table*}

\section{Conclusion}  \label{secVI}
In this paper, we propose a concise yet effective model called NAIM$^3$L to simultaneously tackle the missing labels, incomplete views and non-aligned views challenges with only one hyper-parameter in the objective function. An efficient ADMM algorithm with linear computational complexity regarding the number of samples is derived. Besides, NAIM$^3$L outperforms state-of-the-arts on five real data sets even without view-alignment. Note that, this framework can be directly non-linearized to its kernerlized version and can cooperate with deep neural network as our future research.

\ifCLASSOPTIONcompsoc
  \section*{Acknowledgments}
\else
  \section*{Acknowledgment}
\fi

This work is supported by the Key Program of NSFC under Grant No. 61732006 and NSFC under Grant No. 61672281. The authors would like to thank Dr. Huan Li and Zhenghao Tan for their generous help and beneficial discussions.

\ifCLASSOPTIONcaptionsoff
  \newpage
\fi



\bibliographystyle{IEEEtran}
\bibliography{IEEEabrv,PAMI_ref}
\begin{IEEEbiography}[{\includegraphics[width=1in,height=1.25in,clip,keepaspectratio]{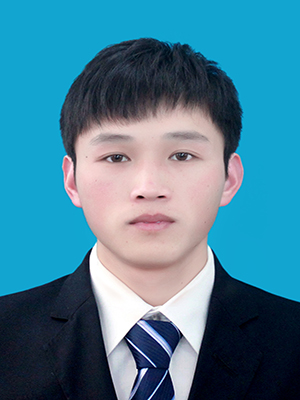}}]{Xiang Li}
received his B.S. degree in mathematics from College of Science, Nanjing University of Aeronautics and Astronautics in 2012. And he completed his M.S. degree in applied mathematics from College of Science, Nanjing University of Aeronautics and Astronautics in 2018. 

Now he is a fourth-year Ph.D. candidate in College of Computer Science and Technology, Nanjing University of Aeronautics and Astronautics. His research interests include multi-view learning and multi-label learning.
\end{IEEEbiography}

\begin{IEEEbiography}[{\includegraphics[width=1in,height=1.25in,clip,keepaspectratio]{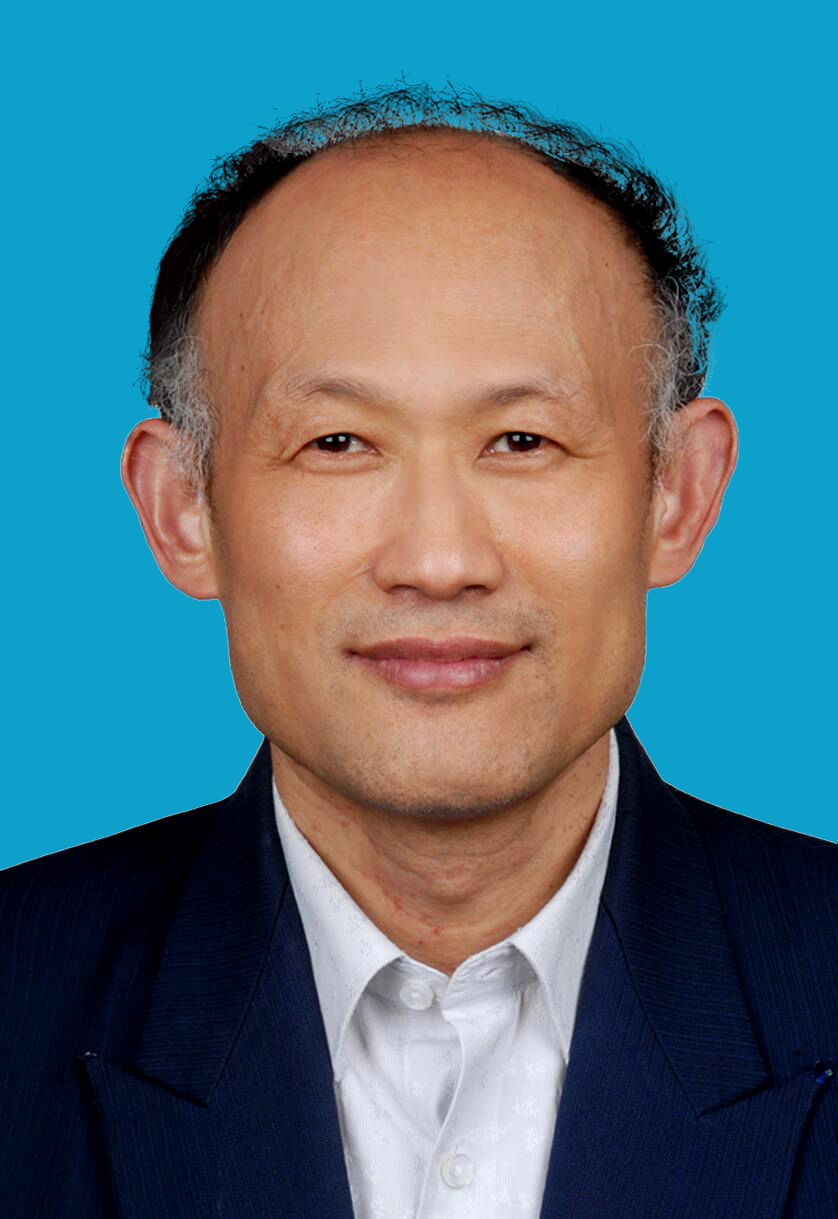}}]{Songcan Chen}
received his B.S. degree in mathematics from Hangzhou University (now merged into Zhejiang University) in 1983. In 1985, he completed his M.S. degree in computer
applications at Shanghai Jiaotong University and then worked at Nanjing University of Aeronautics and Astronautics (NUAA) in January 1986, where he received a Ph.D. degree in communication and information systems in 1997.

Since 1998, as a full-time professor, he has been with the College of Computer Science and Technology at NUAA. His research interests include pattern recognition, machine learning and neural computing. He has published over 100 top-tier journals, such as IEEE Transactions on Pattern Analysis and Machine Intelligence, IEEE Transactions on Knowledge and Data Engineering, IEEE Transactions on Image Processing, IEEE Transactions on Neural Networks and Learning Systems, IEEE Transactions on Fuzzy Systems, IEEE Transactions on Information Forensics \& Security, IEEE Transactions on Systems, Man \& Cybernetics-Part B, IEEE Transactions on Wireless Communication and conference papers, such as International Conference on Machine Learning, IEEE Conference on Computer Vision and Pattern Recognition, International Joint Conference on Artificial Intelligence, AAAI Conference on Artificial Intelligence, IEEE International Conference on Data Mining and so on. He is also an IAPR Fellow.
\end{IEEEbiography}

\end{document}